\DeclareSymbolFont{rsfs}{U}{rsfs}{m}{n}
\DeclareSymbolFontAlphabet{\mathscrsfs}{rsfs}
\colorlet{linkequation}{blue}
\definecolor{zy}{RGB}{225,0,0}
\begin{document}

\twocolumn[

\aistatstitle{Learning Correlated Stackelberg Equilibrium in General-Sum Multi-Leader-Single-Follower Games}

\aistatsauthor{ Yaolong Yu$^{\dagger}$ \And Haifeng Xu$^{\ddagger}$ \And  Haipeng Chen$^{\diamond}$ }

\aistatsaddress{ Shandong University \And University of Chicago \And  College of William \& Mary } ]

\begin{abstract}
Many real-world strategic games involve interactions between multiple players. We study a hierarchical multi-player game structure, where players with asymmetric roles can be separated into leaders and followers, a setting often referred to as Stackelberg game or leader-follower game. In particular, we focus on a Stackelberg game scenario where there are multiple leaders and a single follower, called the Multi-Leader-Single-Follower (MLSF) game. 
We propose a novel asymmetric equilibrium concept for the MLSF game called Correlated Stackelberg Equilibrium (CSE). We design online learning algorithms that enable the players to interact in a distributed manner, and prove that it can achieve no-external Stackelberg-regret learning. This further translates to the convergence to approximate CSE via a reduction from no-external regret to no-swap regret. At the core of our works, we solve the intricate problem of how to learn equilibrium in leader-follower games with noisy bandit feedback by balancing exploration and exploitation in different learning structures. 
\end{abstract}

\section{Introduction}

Game theory studies the interactions between multiple strategic players or agents~\citep{roughgarden2010algorithmic,osborne2004introduction}. Many real-world domains such as economics and policy making can be described using a hierarchical game structure among the players, where the two levels of players have asymmetric roles and can be partitioned into \textit{leaders} and \textit{followers}~\citep{sherali1983stackelberg}. 
This type of games is called \emph{Stackelberg} or leader-follower games. Depending on the game structures, the Stackelberg game literature can be categorized into single-leader-single-follower (SLSF) games \citep{conitzer2006computing,blum2014learning}, single-leader-multi-follower (SLMF) games~\citep{ramos2016water,salas2020optimal}, multi-leader-single-follower (MLSF) games ~\citep{aussel2016deregulated,escobar2008equilibrium,hu2007using,gan2018stackelberg}, and multi-leader-multi-follower (MLMF) games~\citep{mallozzi2017multi,sherali1983stackelberg}. 
We focus on the repeated general-sum MLSF game setting, where at each round, multiple leaders first make a collective decision, and then a single follower reacts to the leaders' decision. Such a game setting has broad implications in real-world problems, such as security games~\citep{gan2018stackelberg}, deregulated electricity markets \citep{aussel2016deregulated}, and industrial eco-parks~\citep{ramos2016water}. Most of the MLSF games literature assume that the game parameters (e.g., players' loss functions) are known a-priori, and they focus on finding equilibrium in MLSF games via optimization methods~\citep{leyffer2010solving,kulkarni2014shared}. In this paper, we present the first study of MLSF games from a \textit{learning} perspective. In particular, we ask the fundamental research question:

\textit{Can we design efficient learning algorithms that provably reach equilibrium in repeated general-sum MLSF games?}

An immediate follow-up question is: what is an appropriate equilibrium concept for this setting? \citet{gan2018stackelberg} propose an equilibrium concept called Nash stackelberg equilibrium (NSE) for multi-defender-single-attacker security games. But as pointed out in \citet{gan2018stackelberg}, when there exist malicious defenders, an $\epsilon$-NSE may not exist for every $\epsilon>0$, which means that NSE does not always exist in general-sum MLSF games. Inspired by this and the correlated equilibrium concept~\citep{aumann1974subjectivity,aumann1987correlated}, we propose a more viable and realistic equilibrium concept, which we call Correlated Stackelberg Equilibrium (CSE).

Using CSE as the equilibrium concept, we give an affirmative answer to the above question for a broad range of repeated general-sum MLSF games. We summarize our key contributions as follows: 
(i) We start with a simpler setting where each leader knows the loss functions of the follower and itself, and prove that classical adversarial online learning algorithms like Hedge~\citep{cesa2006prediction,littlestone1994weighted} can learn to reach approximate CSE. The result also holds when it is extended to a slightly more complicated setting where the leaders do not know the loss function of the follower, but have access to an oracle that returns the best response of the follower.
(ii) Building on the insights from the simpler settings, we then study the more challenging setting of MLSF games with \textit{noisy bandit feedback}. We design a distributed learning algorithm (called $\alpha$EXP3-UCB) for leaders and the follower. We first study a degenerate scenario of MLSF games, which is the SLSF games setting with noisy bandit feedback (a special case of CSE with a single leader), and prove that it converges to the Stackelberg equilibrium using the $\alpha$Exp3-UCB algorithm. This result is a non-trivial improvement of \citet{bai2021sample} in the online learning setting. (iii) In the ultimate setting of MLSF games, we first provide complexity analysis that shows the hardness of the $\alpha$Exp3-UCB algorithm in a true MLSF setting. Based on the analysis, we then devise a more efficient two-stage learning algorithm that still provably learns to converge to CSE in the MLSF setting.

\section{Related Work}
\paragraph{Leader-follower games}
There is a line of works that studies optimization-based methods for finding equilibrium in MLSF games~\citep{leyffer2010solving,kulkarni2014shared,vicente1994bilevel}. They center around a bilevel optimization problem structure called equilibrium program with equilibrium constraints (EPEC), and aim at devising efficient optimization methods to solve the bilevel optimization problem. 
As important application domains, many works study the specific MLSF scenarios of deregulated electricity markets \citep{allevi2018equilibrium,escobar2008equilibrium}, or multi-denfender security game~\citep{jiang2013defender,gan2018stackelberg,basilico2017computing}. 
\citet{gan2018stackelberg} propose NSE as an equilibrium concept for the multi-defender security games. They show that an exact equilibrium may fail to exist, and deciding whether it exists is NP-hard. Moreover, they show that an approximate $\epsilon$-NSE may not exist for every $\epsilon>0$ in the presence of malicious defenders. This motivates us to come up with CSE as a more viable equilibrium concept.


\paragraph{Learning Stackelberg equilibrium in games}
The line of works that is close to our work is the literature on learning Stackelberg equilibrium in SLSF games.
\citet{peng2019learning} study the problem of
learning the optimal leader strategy in Stackelberg games with a follower best response oracle, which means that the leader knows the exact follower best response to its action. Such kind of assumption is also made in other works along that line~\citet{letchford2009learning,blum2014learning}.
Two player zero-sum games have been extensively studied in the broader games and learning literature~\citep{fasoulakis2021forward,rakhlin2013optimization,balduzzi2019open}, and it is known that Stackelberg equilibrium is equivalent to Nash equilibrium in convex-concave setting due to von Neumann's minimax theorem~\citep{v1928theorie}.
The equilibrium in two player zero-sum game is easier to compute because the two players essentially have the same objective for any action pair. This property no longer exists for general-sum Stackelberg games, and therefore general-sum Stackelberg games are considered harder to learn. 
\citet{bai2021sample} study learning algorithms for Starkelberg equilibrium in general-sum SLSF games with noisy bandit feedback in a batch version. Their proposed algorithm needs to query each pair of leader-follower actions for sufficient rounds to calculate the empirical mean. Therefore, it  requires a \emph{centralized} authority to learn the equilibrium instead of distributed player self-learning, as is in this paper. Moreover, we focus on the more challenging problem of learning CSE in general-sum MLSF games with noisy bandit feedback, which is much more computationally expensive because of the exponentially growing joint action space for multiple leaders. To the best of our knowledge, this is the first work on MLSF games of any kind from a learning perspective.  

\section{Preliminary}

\subsection{Repeated general-sum MLSF games} 
A general-sum MLSF game is represented as a tuple $\{\mathcal{A}, \mathcal{B}, l\}$. In this setting, two levels of decision makers are considered: a set of $m$ leaders $1,\ldots,m$ and one follower $f$. $\mathcal{A}_i$ represents the action set of leader $i$, and $\mathcal{B}$ represents the action set of follower $f$. For each leader $i$, $|\mathcal{A}_i|=n_i$ is the cardinality of its action set. We assume $n_1=n_2=\cdots=n_m=n$ for clarity, and all the results can be generalized when they are not the same. We denote $\mathcal{A}=\mathcal{A}_1 \times \mathcal{A}_2 \times \cdots \times \mathcal{A}_m$ the action set for all the leaders, and $\mathcal{A}\times\mathcal{B}$ the joint action set of all the leaders and the follower. We assume all action sets are discrete and finite. For any joint action $(a,b)=(a_1,a_2,\cdots,a_n,b)\in \mathcal{A}\times \mathcal{B}$, the loss function for leader $i$ is $l_{i}=\{l_i(a,b): \mathcal{A}\times\mathcal{B}\rightarrow [0,1]\}$. To distinguish the loss function and the noisy (i.e., stochastic) loss value for a given data sample, we use a different notation $\xi_i(a,b)\in [0,1]$ to represent the noisy form loss value for leader $i$. By definition, $l_i(a, b)=\mathbb{E}[\xi_i(a, b)]$.
Similarly for follower $f$, the loss function and the noisy loss value for one data sample are respectively represented as $l_f=\{l_f(a,b): \mathcal{A}\times\mathcal{B}\rightarrow [0,1]\}$ and $\xi_f(a,b)\in [0,1]$, where $l_f(a, b)=\mathbb{E}[\xi_f(a, b)]$.  
For each leader and the follower, the goal is to minimize its own loss function. 

In a repeated game, the players play iteratively at each round $t$, with a time horizon of $T$ rounds. Because of the asynchronous moves, the leader usually maintains a \textit{mixed strategy}. 
We denote leader $i$'s \textit{mixed strategy} as $P_i=\{P_i(a_i):\mathcal{A}_i\rightarrow [0,1],\sum_{a_i\in \mathcal{A}_i}P_i(a_i)=1\}$, which is a probability distribution of taking each action $a_i\in\mathcal{A}_i$. $P_i^t$ is the mixed strategy of leader $i$ at round $t$. $\chi^t=P_1^t\times\cdots\times P_m^t$ is the joint mixed strategy for all the leaders at round $t$.  $\Bar{\chi} = \frac{1}{T}\sum_{t=1}^T \chi^t$ is the time averaged joint strategy profile distribution for the all the $T$ rounds. After the leaders commit a certain joint action from their mixed strategies, the follower then observes the action and uses a \textit{pure strategy} $b$.
In each round of a typical repeated general-sum MLSF game, the procedure is as follows:
\begin{itemize}
    \item
    Each leader $i$ simultaneously plays an action $a_i \in \mathcal{A}_i$ according to their mixed strategies.
    \item
    The follower $f$ observes the leaders' joint action $a$, and plays a pure strategy $b$ as the response.
    \item
    Depending on the game setting (the type of feedback), each leader $i$ observes the loss $l_i(a,b)$ or noisy loss $\xi_i(a,b)$,  while the follower $f$ observes its loss $l_f(a,b)$ or noisy loss $\xi_f(a,b)$.
\end{itemize}

\begin{assumption}\label{assum:singleton}
For simplicity, we assume that for every action $a\in \mathcal{A}$, the set of follower best responses $\text{Br}(a)=\argmin_{b\in \mathcal{B}} l_{f}(a,b)$ is a singleton, i.e., $|\text{Br}(a)|=1$. This means that for any action played by leaders, the follower has a unique best response.
\end{assumption}

\textbf{Remark.}
There has been ambiguity in the equilibrium formulation in Stackelberg games when there may be multiple follower best responses, since there is no explicit ways of breaking ties between multiple follower best responses. A typical way for tie-breaking is to assume that the follower is either in favor of the leader utility (optimistic) or against it (pessimistic). However, we found that this tie-breaking rule incurs further ambiguity in the MLSF games setting because if each leader assumes that follower is in favor of (or against) the leader itself, then the leaders are assuming different follower best responses (one for each leader). This inconsistency makes it infeasible to converge to any equilibrium in a repeated MLSF games setting. Therefore, we follow the common practice in similar problems such as multi-armed bandit games~\citep{auer2002nonstochastic,audibert2010best} and MLSF games~\citep{aussel2020short} and assume a unique best response. 
Since equilibrium concepts like NSE~\citep{gan2020decentralized} do not generally hold under wild conditions, we propose a more viable equilibrium concept as follows: 
\begin{definition}[$\epsilon$-CSE]\label{def:cse}
Define swap function $s: \mathcal{A}_i \to \mathcal{A}_i$. $\chi$ is an $\epsilon$-CSE when the following inequality holds for any swap function $s$, for any $i\in[m]$,
\begin{equation}
\!\mathop{\mathbb{E}}\limits_{a \sim \chi}\!\!\!\left[l_{i}(a,\text{Br}(a)) \right] \!\leq\!\!\! \mathop{\mathbb{E}}\limits_{a \sim \chi}\!\!\left[l_{i}\!\left(s(a_{i}),\!a_{-i},\!\text{Br}(s(a_{i}),a_{-i})\right) \right]\! + \epsilon
\end{equation}
\end{definition}
In the rest of the paper, we will present existence results of CSE under a broad range of MLSF game settings. Apart from the existence property, CSE is also more desirable because it does not require centralized learning. For example, in many real-world security domains, there are often multiple self-interested security teams who conduct patrols over the same set of important targets without coordinating with each other~\citep{jiang2013defender}. As another example of wildlife conservation, when patrol teams from various NGOs or provinces patrol within the same conservation area to protect wildlife from poaching -- different NGOs or provinces may have different types of targeted species (e.g., the situation in Pakistan~\citep{wwf2015parkistan}) and tend to operate separately. Similarly, there are different countries which simultaneously plan their own anti-crime actions in international waters against illegal fishing~\citep{klein2017can}.

\subsection{Learning CSE}
In practice, the ground truth loss functions $l_i, \forall i\in [m]$ and $l_f$ are not revealed to each other, or even not known by the player itself, so that the players need to learn via repeatedly playing the games. This results in an online learning setting, where the players need to balance between \textit{exploiting} the optimal strategy learned so far, and \textit{exploring} potential better strategies.

We define ${\Delta_{ak}}=l_f(a,k)-\min_{b \in \mathcal{B}} l_f(a,b)$ for any $a\in \mathcal{A}$, $k\in[n_f]$ to measure the gap in losses between the follower's k-th action and the optimal one when leaders play $a$. And we define minimal loss gap as $\varepsilon=\min_{a\in\mathcal{A},k\in[n_f]} \Delta_{ak}$. The objective of follower is to find the best response $\text{Br}(a)$ for every joint action $a \in \mathcal{A}$.

The expected loss of a leader $i$'s action under the follower best response is as follows. Note that the expectation is taken over the joint mixed strategy distribution of the other leaders, i.e.,
\begin{equation}\label{eq:expected_loss_full}
L_i^t(a_i)=\mathbb{E}_{a_{-i}^t\sim P_{-i}^t}\left[l_i(a_i,a_{-i}^t,\text{Br}(a_i,a_{-i}^t))\right].
\end{equation}
The objective of each leader $i$ is to minimize the following Stackelberg-regret:
\begin{equation}\label{eq:regret}
R_i^S(T)=\sum_{t=1}^T\mathbb{E}_{a_i^t\sim P_i^t}\left[L_i^t(a_i^t)-L_i^t(a_{i,\star})\right],
\end{equation}
where $a_{i,\star}=\argmin_{a_i\in\mathcal{A}}\sum_{t=1}^TL_i^t(a_i)$ is the optimal action in hindsight. The superscript $S$ implies ``Stackelberg''. 



Our goal is to design online learning algorithms for both leaders and the follower that are able to achieve the $\epsilon$-CSE.
In the following context, we consider different settings in general-sum MLSF games. Depending on the types of feedback information on a player's loss function, we separate them into three categories, including full information, semi-bandit information and noisy bandit feedback. In the first two simpler settings, leaders have access to the follower's best response oracle. In the last and more challenging setting (where our main results are focused), all players can only get noisy bandit feedback.

\section{Warm-up:  follower best response  as an oracle }

We begin with two simple settings as warm up. In both cases, the leaders have access to an oracle which returns the follower's best response given the leaders' joint action. Therefore, we can essentially treat the follower as part of the environment that affects the leaders' loss values (the $\text{Br}(a_i,a_{-i}^t)$ term in Eq.\eqref{eq:expected_loss_full}) via the best response oracle.



\subsection{Existence of CSE with full information}\label{sec:warm-up-full}
In the full information setting, each leader $i$ knows its own loss function $l_i$ and the follower's loss function $l_f$, and can observe the joint mixed strategy $\chi^t$ at each round $t$. Note that although we call it ``full information'', leader $i$ does not need to know the other leaders' loss functions, so that the process is still distributed.

Hedge is an online learning algorithm first designed for solving how to learn from expert advice, aiming to minimize the expected cummulative losses in an adversarial environment~\citep{littlestone1994weighted,cesa2006prediction}. By applying it into full information MLSF games, we can prove that: 

\begin{proposition}
If leader $i$ uses the \text{Hedge} algorithm
in full information repeated general-sum MLSF games, define $R_i^S(T)$ as in Eq.\eqref{eq:regret}, it achieves no-external Stackelberg-regret in the following sense: 
$$
    R_i^S(T)\leq\mathcal{O}\left(\sqrt{\ln n T}\right).
$$
\end{proposition}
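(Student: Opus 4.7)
The plan is to reduce the leader's online learning problem to a standard experts (Hedge) instance by \emph{compiling out} both the follower and the other leaders into a single time-varying loss vector over leader $i$'s own action set $\mathcal{A}_i$. Concretely, at each round $t$ I would have leader $i$ form, at the end of the round, the vector
\[
\ell_i^t \in [0,1]^n, \qquad \ell_i^t(a_i) := L_i^t(a_i) = \sum_{a_{-i}} P_{-i}^t(a_{-i})\, l_i\bigl(a_i, a_{-i}, \mathrm{Br}(a_i, a_{-i})\bigr),
\]
which, by the full-information assumptions, is computable locally: $l_i$ is known to leader $i$, $\mathrm{Br}(\cdot)$ is available as an oracle (since $l_f$ is known), and $P_{-i}^t$ is obtained from the observed joint mixed strategy $\chi^t$. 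Then leader $i$ runs the standard Hedge update on the sequence $\{\ell_i^t\}_t$ over its $n$ actions.

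The key verification step is that this sequence fits the Hedge framework: (i) each $\ell_i^t(a_i)$ lies in $[0,1]$, since it is a convex combination of values of $l_i \in [0,1]$; and (ii) Hedge's regret guarantee is robust to adaptive adversaries, which matters here because $P_{-i}^t$ can depend on leader $i$'s past plays via the other leaders' own learning dynamics. Once these are in place, the classical Hedge bound with tuned learning rate yields, against any fixed action $a_i^\star \in \mathcal{A}_i$,
\[
\sum_{t=1}^T \mathbb{E}_{a_i^t \sim P_i^t}\!\left[\ell_i^t(a_i^t)\right] \;-\; \sum_{t=1}^T \ell_i^t(a_i^\star) \;\leq\; \mathcal{O}\!\left(\sqrt{T \ln n}\right).
\]
Choosing $a_i^\star = a_{i,\star}$, this is exactly the Stackelberg-regret $R_i^S(T)$ from Eq.\eqref{eq:regret}, since $L_i^t(a_i) = \ell_i^t(a_i)$ by construction.

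I do not expect a serious obstacle in this warm-up case; the only subtle point to pin down is the identification of the Stackelberg-regret with the Hedge regret on the compiled losses. One must check that $L_i^t(a_i)$ is measurable with respect to the history available at the end of round $t$ (so that Hedge's next distribution $P_i^{t+1}$ is well-defined), and that treating the other leaders plus the follower-oracle as an adaptive adversary is valid. Both follow immediately from the repeated-game protocol and the fact that Hedge's proof (via the potential $\Phi_t = \ln \sum_{a_i} w_t(a_i)$) does not require $\ell^t$ to be oblivious. The complexity of \emph{computing} $\ell_i^t$ is $\mathcal{O}(n^{m-1})$ oracle calls, but this does not affect the regret bound --- which is the only claim of the proposition.
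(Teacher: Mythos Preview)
Your proposal is correct and matches the paper's approach essentially line for line: the paper also feeds Hedge the compiled loss vector $L_i^t(\cdot)\in[0,1]^n$ from Eq.~\eqref{eq:expected_loss_full}, invokes the standard Hedge regret inequality $\sum_t \langle P_i^t, L_i^t\rangle - \sum_t L_i^t(a_{i,\star}) \leq \frac{\ln n}{\eta} + \eta T$, and tunes $\eta$ to obtain $\mathcal{O}(\sqrt{T\ln n})$. Your added remarks on adaptivity and measurability are sound but not needed beyond what the paper implicitly assumes.
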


\text{Hedge} can achieve no-external regret learning against adversarial losses at each round by using a classical exponential weights update of the policies. At each round $t$, after all leaders show their mixed strategy distribution, each leader $i$ uses the \text{Hedge} algorithm to update its mixed strategy with the expected loss defined as $L_i^t(a_i)$ in Eq.~\eqref{eq:expected_loss_full} for every action $a_i\in \mathcal{A}_i$. Based on the regret analysis of the \text{Hedge} algorithm, each leader $i$ achieves no-external Stackelberg-regret learning in this process. We provide detailed descriptions of the \text{Hedge} algorithm and the proof in Appendices~\ref{appendix:hedge} and~\ref{appendix:lemma1}, respectively. 

\begin{corollary}
In the full information setting, when all leaders use Hedge as the learning algorithm, together with a reduction from no-external to no-swap regret~\citep{blum2007external,ito2020tight}, the time averaged joint strategy profile distribution $\bar{\chi}=\frac{1}{T}\sum_{t=1}^T \chi^t$ converge to an $\epsilon^T$-CSE. 
$\epsilon^T=\mathcal{O}\left(\sqrt{\frac{n\ln n}{T}}\right) \to 0$ as $T \to \infty$, which implies $\epsilon$-CSE always exists in general-sum MLSF game for any $\epsilon>0$. 
\end{corollary}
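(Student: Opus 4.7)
The plan is to convert the no-external-regret guarantee from the preceding proposition into a no-swap-regret guarantee on the sequence of Stackelberg losses, and then observe that a vanishing per-round swap regret is precisely the certificate required by Definition~\ref{def:cse}. Concretely, I would have each leader $i$ run the Blum-Mansour reduction \citep{blum2007external} on top of Hedge: internally the leader maintains $n$ sub-learners $H_{i,1},\dots,H_{i,n}$, each an instance of Hedge over $\mathcal{A}_i$, and at each round $t$ plays the stationary distribution $P_i^t$ of the Markov chain induced by their current weights. When the round-$t$ Stackelberg loss vector $L_i^t(\cdot)$ (defined in Eq.~\eqref{eq:expected_loss_full}) is revealed, the loss observed by Hedge $H_{i,k}$ is $P_i^t(k)\cdot L_i^t(\cdot)$. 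Standard analysis shows $H_{i,k}$ suffers external regret $\mathcal{O}\bigl(\sqrt{T_{i,k}\ln n}\bigr)$ with $T_{i,k} = \sum_t P_i^t(k)$, and Cauchy--Schwarz on $\sum_k T_{i,k}=T$ yields the swap regret bound
\begin{equation*}
R_i^{S,\mathrm{swap}}(T)\;\coloneqq\;\max_{s:\mathcal{A}_i\to\mathcal{A}_i}\sum_{t=1}^T \mathbb{E}_{a_i^t\sim P_i^t}\bigl[L_i^t(a_i^t)-L_i^t(s(a_i^t))\bigr]\;\leq\;\mathcal{O}\bigl(\sqrt{nT\ln n}\bigr).
\end{equation*}

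Next I would unpack the definition of $L_i^t$. Since $L_i^t(a_i)=\mathbb{E}_{a_{-i}\sim P_{-i}^t}[l_i(a_i,a_{-i},\text{Br}(a_i,a_{-i}))]$ and $\chi^t=P_1^t\times\cdots\times P_m^t$, the swap regret bound rewrites as
\begin{equation*}
\sum_{t=1}^T \mathbb{E}_{a\sim\chi^t}\bigl[l_i(a,\text{Br}(a))\bigr] \;-\; \sum_{t=1}^T \mathbb{E}_{a\sim\chi^t}\bigl[l_i(s(a_i),a_{-i},\text{Br}(s(a_i),a_{-i}))\bigr]\;\leq\;\mathcal{O}\bigl(\sqrt{nT\ln n}\bigr)
\end{equation*}
for every swap $s$. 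Dividing through by $T$ and using linearity of expectation together with $\bar\chi=\tfrac{1}{T}\sum_{t}\chi^t$ gives
\begin{equation*}
\mathbb{E}_{a\sim\bar\chi}\bigl[l_i(a,\text{Br}(a))\bigr]\;\leq\;\mathbb{E}_{a\sim\bar\chi}\bigl[l_i(s(a_i),a_{-i},\text{Br}(s(a_i),a_{-i}))\bigr]\;+\;\mathcal{O}\!\left(\sqrt{\tfrac{n\ln n}{T}}\right),
\end{equation*}
which matches Definition~\ref{def:cse} with $\epsilon^T=\mathcal{O}(\sqrt{n\ln n/T})$. Taking $T\to\infty$ shows $\epsilon^T\to 0$, and for any prescribed $\epsilon>0$ the bound $\epsilon^T<\epsilon$ holds once $T=\Omega(n\ln n/\epsilon^2)$, giving existence of an $\epsilon$-CSE.

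The main technical obstacle I anticipate is not the Hedge analysis itself (already granted by the proposition) but justifying that the Blum-Mansour reduction applies cleanly in the Stackelberg setting. The subtlety is that the ``losses'' passed to each sub-learner are the Stackelberg losses $L_i^t(\cdot)$, which depend on the other leaders' randomization and on the follower's best-response oracle; one must verify these are valid $[0,1]$-bounded adversarial losses observable to leader $i$ at the end of each round (they are, under full information plus oracle access), and that the swap comparator in the unfolded inequality coincides with the swap comparator appearing in Definition~\ref{def:cse} --- i.e., that a swap $s$ applied to leader $i$'s realized action is equivalent to a pointwise swap of the $i$-th coordinate of the joint action drawn from $\bar\chi$. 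Once this correspondence is pinned down, the remaining steps are algebraic and follow from the Blum-Mansour bound of $\mathcal{O}(\sqrt{nT\ln n})$ divided by $T$.
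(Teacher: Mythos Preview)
Your proposal is correct and follows essentially the same route as the paper: obtain an $\mathcal{O}(\sqrt{nT\ln n})$ swap-regret bound on the Stackelberg losses $L_i^t(\cdot)$ via the no-external-to-no-swap reduction, then divide by $T$ and use linearity of expectation over $\bar\chi=\tfrac{1}{T}\sum_t\chi^t$ to match Definition~\ref{def:cse}. The only difference is presentational: the paper invokes the reduction as a black box (citing Theorem~2 of \citet{ito2020tight}), whereas you spell out the Blum--Mansour construction and the Cauchy--Schwarz step explicitly; your additional check that the $L_i^t(\cdot)$ are $[0,1]$-valued and observable under the full-information assumption is a detail the paper leaves implicit.
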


See proof in Appendix~\ref{appendix:lemma1} for the detailed proof. 

\subsection{Semi-bandit MLSF games}\label{sec:semi-bandit}
We now consider a more realistic setting with semi-bandit feedback, which is the same as the full information setting except that the leaders do not know the exact follower loss function $l_f$, but instead only receives a bandit feedback $l_i(a^t,b^t)$ and the joint action $(a_t,b_t)$ that is taken at round $t$.
In other words, at each round $t$, every leader $i$ observes the joint action $(a_t,b_t)$ that is taken, $b^t=\text{Br}(a^t)$, and can obtain the loss value $l_i(a^t,b^t)$ for that round. This is opposed to full information setting where each leader $i$ knows the \textit{expected} loss value $L_i^t(a_i)$ for any action $a_i\in\mathcal{A}_i$ defined in Eq.~\eqref{eq:expected_loss_full}. 

EXP3~\citep{auer2002nonstochastic} is a classical algorithm modified from Hedge to suit the bandit information settings, which can be used to achieve no regret learning from partial bandit feedback. Since every leader only receives information for selected actions from the environment, we use the EXP3 algorithm combined with the concentration inequality method to bound the Stackelberg-regret $R_i^S(T)$ in Eq.\eqref{eq:regret}. Formally, we have
\begin{proposition}\label{lemma:semi-bandit}
If leader $i$ uses EXP3 in semi-bandit MLSF games, it achieves no-external Stackelberg-regret learning with probability at least $1-p$, i.e.,
\[
\begin{aligned}
    R_i^S(T)\leq\mathcal{O}\left(\sqrt{Tn\ln n}+\sqrt{T\ln\frac{1}{p}}\right).
\end{aligned}
\]
\end{proposition}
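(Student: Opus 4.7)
The plan is to reduce the semi-bandit MLSF setting, viewed from leader $i$'s perspective, to a standard adversarial multi-armed bandit problem on action set $\mathcal{A}_i$, and then apply a high-probability version of the EXP3 analysis. First I would observe that once leaders $-i$ and the follower best-response oracle are absorbed into the ``environment'', the per-round expected loss seen by leader $i$ is exactly $L_i^t(a_i)$ from Eq.~\eqref{eq:expected_loss_full}, which is an arbitrary (adversarially chosen by the other leaders' mixed strategies) sequence in $[0,1]^n$. After leader $i$ samples $a_i^t \sim P_i^t$ and the follower plays $b^t=\mathrm{Br}(a^t)$, the observed scalar $l_i(a^t,b^t)$ is an unbiased sample of $L_i^t(a_i^t)$, since the randomness of $a_{-i}^t$ is integrated out exactly as in the definition of $L_i^t$.

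The key object is the importance-weighted estimator
\[
\hat{L}_i^t(a_i) \;=\; \frac{l_i(a^t,b^t)\,\mathbbm{1}[a_i^t=a_i]}{P_i^t(a_i)},
\]
which satisfies $\mathbb{E}[\hat{L}_i^t(a_i)\mid\mathcal{F}_{t-1}] = L_i^t(a_i)$. Running EXP3 with forced uniform exploration at rate $\gamma$ (so that $P_i^t(a_i)\ge\gamma/n$) and learning rate $\eta$ on these estimators, the standard multiplicative-weights/potential argument (Cesa-Bianchi and Lugosi) yields the deterministic inequality
\[
\sum_{t=1}^{T}\sum_{a_i}P_i^t(a_i)\hat{L}_i^t(a_i) \;-\; \sum_{t=1}^{T}\hat{L}_i^t(a_{i,\star}) \;\le\; \frac{\ln n}{\eta} + \eta\sum_{t=1}^{T}\sum_{a_i}P_i^t(a_i)\hat{L}_i^t(a_i)^2 + \gamma T,
\]
and the second-moment term is $\mathcal{O}(nT)$ in expectation. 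Tuning $\eta,\gamma$ as in classical EXP3 gives the $\mathcal{O}(\sqrt{Tn\ln n})$ piece.

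To promote this expected statement to the high-probability bound in the proposition, I would next apply concentration to the two martingale difference sequences $X_t(a_i) := \hat{L}_i^t(a_i) - L_i^t(a_i)$ for $a_i\in\{a_i^t, a_{i,\star}\}$ and for $\sum_{a_i}P_i^t(a_i)\hat{L}_i^t(a_i) - \sum_{a_i}P_i^t(a_i)L_i^t(a_i)$. Because the mixing step enforces $|\hat{L}_i^t(a_i)|\le n/\gamma$, each difference is bounded, so Azuma--Hoeffding (or Freedman, for a slightly tighter constant) gives a deviation of order $\sqrt{T\ln(1/p)}$ after a union bound over at most $n$ arms (absorbed into the $\sqrt{Tn\ln n}$ term). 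Adding the EXP3 inequality and the concentration tail bound, and rearranging $\sum_t\mathbb{E}_{a_i^t\sim P_i^t}[L_i^t(a_i^t)-L_i^t(a_{i,\star})] = R_i^S(T)$, produces the claimed bound $\mathcal{O}(\sqrt{Tn\ln n}+\sqrt{T\ln(1/p)})$.

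The step I expect to be the main obstacle is the concentration transfer: naively, $\hat{L}_i^t(a_{i,\star})$ can be as large as $1/P_i^t(a_{i,\star})$, which is not a priori bounded and would spoil Azuma--Hoeffding. The fix is the mild forced-exploration perturbation (or equivalently an IX-style implicit exploration shift), together with balancing its $\gamma T$ bias against the variance term when tuning $\eta$ and $\gamma$. Beyond this technicality the argument is a clean adversarial-bandit reduction, so I would keep the bookkeeping in an appendix and in the main text only highlight (i) unbiasedness of $\hat{L}_i^t$ under the expectation over $a_{-i}^t$, and (ii) the martingale structure that drives the concentration term.
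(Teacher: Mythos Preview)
Your reduction to an adversarial bandit is the right viewpoint, and the deterministic EXP3 potential inequality you write down is what drives the $\sqrt{Tn\ln n}$ term. The genuine gap is in the concentration step you propose for the $\sqrt{T\ln(1/p)}$ term.

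You want to control $\sum_t\bigl(\hat L_i^t(a_{i,\star})-L_i^t(a_{i,\star})\bigr)$ by Azuma--Hoeffding after forcing $|\hat L_i^t|\le n/\gamma$ via uniform mixing. But Azuma with increments of range $n/\gamma$ yields a deviation of order $(n/\gamma)\sqrt{T\ln(1/p)}$, not $\sqrt{T\ln(1/p)}$; with any tuning $\gamma=o(1)$ needed to keep the bias $\gamma T$ sublinear, this term blows up. Freedman helps only partially: the conditional second moment of $\hat L_i^t(a_{i,\star})$ is at most $n/\gamma$, so the Freedman deviation is $\sqrt{(nT/\gamma)\ln(1/p)}+(n/\gamma)\ln(1/p)$, which for $\gamma\asymp\sqrt{n\ln n/T}$ is of order $T^{3/4}$, not $\sqrt{T}$. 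This is exactly the known obstruction to high-probability guarantees for vanilla EXP3; closing it needs EXP3.P or IX-style biased estimators, and even those do not separate out a clean additive $\sqrt{T\ln(1/p)}$. So the fix you propose does not deliver the claimed bound.

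The paper sidesteps this by applying concentration to a different source of randomness. It introduces the realized loss $\widetilde L_i^t(a_i)=l_i(a_i,a_{-i}^t,\text{Br}(a_i,a_{-i}^t))$ (a function of the \emph{sampled} $a_{-i}^t$, not of an importance weight) and decomposes
\[
R_i^S(T)=\underbrace{\sum_t \mathbb{E}_{a_i^t\sim P_i^t}\!\bigl[L_i^t(a_i^t)-\widetilde L_i^t(a_i^t)\bigr]}_{\text{I}}
+\underbrace{\sum_t \mathbb{E}_{a_i^t\sim P_i^t}\!\bigl[\widetilde L_i^t(a_i^t)-\widetilde L_i^t(a_{i,\star})\bigr]}_{\text{II}}
+\underbrace{\sum_t\bigl(\widetilde L_i^t(a_{i,\star})-L_i^t(a_{i,\star})\bigr)}_{\text{III}}.
\]
Term~II is the EXP3 pseudo-regret against the realized sequence $\widetilde L_i^t\in[0,1]^n$ and is bounded by $\mathcal{O}(\sqrt{nT\ln n})$ via the standard expected-regret analysis; the crucial point is that the expectation over $a_i^t\sim P_i^t$ is already part of the definition of $R_i^S(T)$, so no high-probability control of importance-weighted estimators is ever needed. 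Terms~I and~III are sums of increments $\widetilde L_i^t-L_i^t\in[-1,1]$ driven solely by the randomness of $a_{-i}^t$, hence Azuma with range $1$ gives $\mathcal{O}(\sqrt{T\ln(1/p)})$ directly, and plain EXP3 (no forced exploration) suffices.

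In short, the idea you are missing is to separate the two layers of randomness: the bandit layer ($a_i^t$) is absorbed by the expectation already present in $R_i^S(T)$, and concentration is applied only to the other leaders' sampling ($a_{-i}^t$), whose contribution is uniformly bounded by $1$.
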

Detailed descriptions of the EXP3 algirhtm and the proof of Lemma~\ref{lemma:semi-bandit} can be found in Appendices~\ref{appendix:exp3} and \ref{appendix:lemma2}, respectively.
With a reduction from no-external to no-swap regret~\citep{blum2007external,ito2020tight}, we immediately obtain that

\begin{corollary}
When all leaders use \text{EXP3} as the underlying learning algorithm in repeated general-sum semi-bandit MLSF games, the time averaged joint strategy profile distribution $\bar{\chi}=\frac{1}{T}\sum_{t=1}^T \chi^t$ converges to an approximate $\epsilon^T$-CSE with probability at least $1-p$, where
$\epsilon^T=\mathcal{O}\left(\sqrt{\frac{1}{T}n^2\ln{n}}+\sqrt{\frac{1}{T}n\ln \frac{1}{p} }\right)$.
\end{corollary}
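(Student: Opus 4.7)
The plan is to recognize that, by Definition~\ref{def:cse} and linearity of expectation, the $\epsilon^T$-CSE property of $\bar{\chi} = \frac{1}{T}\sum_{t=1}^T \chi^t$ is equivalent to a uniform swap-regret bound: for every leader $i\in[m]$ and every swap function $s:\mathcal{A}_i\to\mathcal{A}_i$,
$$\sum_{t=1}^T \mathbb{E}_{a\sim\chi^t}\left[l_i(a,\text{Br}(a)) - l_i(s(a_i),a_{-i},\text{Br}(s(a_i),a_{-i}))\right] \leq T\,\epsilon^T.$$
Hence the corollary reduces to showing that each leader, when EXP3 is wrapped in a no-external-to-no-swap reduction, achieves a (Stackelberg) swap-regret of order $\sqrt{Tn^2\ln n} + \sqrt{Tn\ln(1/p)}$ with probability at least $1-p$.

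First I would invoke Proposition~\ref{lemma:semi-bandit} as a black-box external-regret guarantee: a single EXP3 instance, acting on the loss sequence $L_i^t(\cdot)$ defined in Eq.~\eqref{eq:expected_loss_full}, attains Stackelberg-regret $\mathcal{O}(\sqrt{Tn\ln n}+\sqrt{T\ln(1/p')})$ with probability $1-p'$. This is well defined in the semi-bandit setting because the leader observes both the follower's best-response $b^t=\text{Br}(a^t)$ and its own realized loss $\xi_i(a^t,b^t)$, which is precisely the feedback that Proposition~\ref{lemma:semi-bandit} requires.

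Second, I would apply the reduction of \citet{blum2007external}, refined by \citet{ito2020tight}: each leader $i$ runs $n$ independent copies of EXP3, one per ``source'' action; at round $t$ the play distribution $P_i^t$ is taken as a stationary distribution of the row-stochastic matrix whose $j$-th row is the current distribution of copy $j$. The refined analysis translates the per-copy external regret into a swap-regret bound of $\mathcal{O}(\sqrt{Tn^2\ln n}+\sqrt{Tn\ln(1/p)})$ per leader, after a union bound over the $mn$ EXP3 instances (take $p'=p/(mn)$, so that the factor $m$ is absorbed into the logarithm). Dividing through by $T$ and combining with the reformulation in the first paragraph yields the stated $\epsilon^T$.

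The main obstacle will be verifying that the swap reduction composes cleanly with the Stackelberg loss structure. Each EXP3 copy is fed a loss estimator that depends simultaneously on the other leaders' (evolving and correlated) mixed strategies and on the follower's best-response oracle, so one must check (i) that this coupled sequence still falls within the adversarial-loss class for which the concentration step of Proposition~\ref{lemma:semi-bandit} applies, and (ii) that the implicit definition of $P_i^t$ as the fixed point of $n$ interacting copies does not destroy the martingale structure underlying the high-probability EXP3 bound. Once these two consistency checks are verified, the swap-regret bound, and hence the $\epsilon^T$-CSE guarantee, follow at once.
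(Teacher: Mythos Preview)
Your proposal is correct and follows essentially the same route as the paper: invoke Proposition~\ref{lemma:semi-bandit} for the per-leader external Stackelberg-regret, apply the \citet{blum2007external}/\citet{ito2020tight} reduction to obtain the $\sqrt{n}$-blown-up swap-regret bound, and divide by $T$ to read off $\epsilon^T$ via linearity of expectation over $\bar\chi$. The paper treats the reduction as a pure black box (citing Theorem~2 of \citet{ito2020tight}) and does not spell out the $n$-copy construction or the consistency checks you flag in your final paragraph, so your write-up is in fact more careful than the original on those points.
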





\section{MLSF games with noisy bandit feedback}

In this section, we present our main results on a more realistic but also more challenging scenario, where we do not assume that each leader or the follower knows the form of its own loss function, but can only get \textit{noisy (stochastic)} feedback of the loss value of each round. Due to the practicality, learning equilibrium from noisy bandit feedback has been widely studied in both the game theory and online learning literature~\citep{heliou2017learning,bai2021sample}.  

In particular, the work of \citet{bai2021sample}, which is perhaps the closest to our work, considers learning equilibrium in (single) leader-follower games with noisy bandit feedback. But the analysis is based on querying batches of samples with same sizes for every action pair $(a,b)$ all at a time, as opposed to the online learning setting that we focus on. This implicitly requires a third-party authority that is able to control the sampling strategies of all the players. Our analysis does not rely on such a \textit{centralized} sampling procedure, but instead allows the players to learn from \textit{self-playing on-the-go}.

However -- online distributed learning in leader-follower games is considered harder to solve, since it cannot be restricted to evenly query every action pair. When leaders learn without a follower best response oracle, they can only use the feedback from interactions with the follower to update their strategies. But if the algorithm does not sample every leaders' joint action sufficient times, the follower cannot get enough information to learn a stable best response to the leaders. This in turn makes it hard for the leaders to learn its stablized loss since the follower's best response varies from round to round to the same leaders' joint action. In other words, leaders are not guaranteed to achieve no Stackelberg-regret learning if they seldomly choose certain actions. Therefore, it is critical to add a stable exploration to each action $a$ in the leaders' learning algorithm to avoid this issue.


\subsection{$\alpha$EXP3-UCB}
As a reminder, our goal is to design \emph{decentralized} online learning algorithms for both leaders and the follower to be able to achieve the $\epsilon$-CSE, which can be induced when $R_i^S(T)$ is sublinear in $T$ for every $i\in [m]$.

\begin{algorithm}[ht]
\caption{$\alpha$EXP3-UCB}
\label{alg:exp3ucb}
\begin{algorithmic}[1]
 \STATE {\bfseries Require:} $\eta>0$, $\beta\geq 3$
 \STATE $w_i^1=[1,\cdots,1]$, $T_k(n_a(0))=0$, $\hat{\mu}^{0}_{a,k}=0$ for any $a\in\mathcal{A}, k\in [n_f]$
 \FOR{$t=1 \ldots T$}
  \STATE Each leader $i$ sets $\widetilde{P}_i^t\!=\!(1-\alpha)P_i^t\!+\!\alpha[1/n,...,1/n]$
  \STATE Each leader $i$ draws action $a_{i}^t\sim\widetilde{P}_i^t$ 
  \STATE Follower $f$ observes $a^t$; responds with $b^t$ in Eq.\eqref{eq:follower_best_response}
  \STATE Follower $f$ observes $\xi_f^t(a^t,b^t)$ and updates $T_k(n_{a}(t))$ and $\hat{\mu}^{t}_{a, k}$ in Eqs.\eqref{eq:follower_countT}-\eqref{eq:follower_ucb_loss}.
  \STATE Each leader $i$ observes $\xi_i^t(a^t,b^t)$ and constructs the estimate $\widetilde{l}_j^t=\frac{\xi_i^t(a^t,b^t)}{\widetilde{P}_i^t(a^t)}\mathbb{I}\{a_i^t=a_{i,j}\}$ for $j \in [n]$
  \STATE Each leader $i$ updates $P_i^{t+1}$:
  $w_i^{t+1}(j) \leftarrow w_i^t(j)\cdot \exp(-\eta \widetilde{l}_j^t)$,
  $P_i^{t+1}(a_{i,j}) \leftarrow \frac{w_i^{t+1}(j)}{\sum_{j=1}^{n}w_i^{t+1}(j)}$ for $j \in [n]$
\ENDFOR
\end{algorithmic}
\end{algorithm}
Following the above intuition, we propose a new algorithm $\alpha$EXP3-UCB for learning CSE with noisy bandit feedback, as shown in Algorithm~\ref{alg:exp3ucb}. On the high-level, the algorithm is run repeatedly in $T$ rounds. At round $t$, each leader $i$ uses the $\alpha$EXP3 algorithm as the underlying learning method to sample actions (Lines 4-5) and update the strategy (Lines 8-9). For each $a\in \mathcal{A}$, the follower conducts a corresponding Upper Confidence Bound algorithm $\text{UCB}(a)$~\citep{lai1985asymptotically,auer2002finite} for the arms $k \in [n_f]$ , where $k$ is the $k$-th action of the follower, and $[n_f]=\{1,\ldots,n_f\}$ is the set of all the arms of the follower (Lines 6-7).

More specifically, the leaders' learning algorithm $\alpha$EXP3 is essentially the classical \textit{EXP3} algorithm~\citep{auer2002nonstochastic,cesa2006prediction,orabona2019modern} plus an extra explicit exploration term when selecting actions. At round $t$, each leader $i$'s joint action is selected as $a_i^t \sim \widetilde{P}_i^t$, where $\widetilde{P}_i^t=(1-\alpha)P_i^t+\alpha[1/n,...,1/n]$ is a linear combination of $P_i^t$ and a uniform probability. 
The parameter $\alpha$ can be interpreted as the minimum amount of exploration that is guaranteed. It turns out that setting an appropriate $\alpha$ is critical in balancing between sample efficiency and algorithm convergence. Lines 8-9 perform an EXP3-style update of the leaders' strategies, where $\widetilde{l}_j^t$ is an unbiased estimate of the average loss for action $j\in[n]$ using importance sampling, $P_i^{t+1}(a_{i,j})$ is the base strategy that is exponential w.r.t. the negative of $\widetilde{l}_j^t$. $\mathbb{I}\{C\}$ is an indicator function with a value of 1 when condition $C$ is met, and 0 otherwise.


For the follower, because it maintains one UCB(a) subroutine for each leader joint action $a\in\mathcal{A}$, it actually conducts $|\mathcal{A}|$ UCB algorithms. In Line 6, the follower first observes the leaders' joint action $a$, and then uses the corresponding algorithm $\text{UCB}(a)$ to obtain the response strategy: 
\begin{equation}\label{eq:follower_best_response}
    b^t = \argmin_{k\in[n_f]}A_{a^t,k}^{t-1},
\end{equation}
where
\[
A_{a,k}^{t}= \begin{cases}\hat{\mu}^{t}_{{a}, k}\!-\!\sqrt{\frac{2 \beta \ln n_{a}(t)}{T_k(n_{a}(t))}},&\!\!\!   \,\,\,\,T_k(n_{a}(t)) \!\neq\! 0 \\ -\infty, & \text { otherwise. }\end{cases}
\]
Here $n_a(t)$ is the number of times leader plays action $a$ in the first $t$ rounds, $T_k(n_a(t))$ is the number of times the follower plays its $k$-th action when leaders play action $a$ in the first $t$ rounds, and $\hat{\mu}^{t}_{{a}, k}$ is the estimated average loss of arm $k$ under leaders' joint action $a$. In Line 7, the follower then uses the observed noisy feedback $\xi_f^t(a^t,b^t)$ to update $T_k(n_{a}(t))$ and $\hat{\mu}^{t}_{a, k}$ --- for $k\in [n_f], a\in\mathcal{A}$:
\begin{align}
\label{eq:follower_countT}&T_k(n_{a}(t))=T_k(n_{a}(t-1))+\mathbb{I}\left\{a^t\!=\!a\land b^t\!=\!k\right\}, \\
\label{eq:follower_ucb_loss}&\hat{\mu}^{t}_{a, k}\!=\!\frac{1}{T_k(n_{a}(t))}\sum_{h=1}^{t}\xi_f^h(a^h,k)\mathbb{I}\left\{a^h\!=\!a\land b^h\!=\!k\right\}.
\end{align}
As $t$ increases, with the explicit exploration for the leaders, the number of times $n_a(t)$ that leaders play action $a$ also increases. Therefore, the performance of each sub-routine $\text{UCB}(a)$ is guaranteed to improve with more training samples (thus more accurate estimate of average loss $\hat{\mu}^{t}_{a, k}$) under joint leader action $a$.


\subsection{SLSF games with noisy bandit feedback} 


Before addressing the more complicated MLSF game setting, we first start with a simpler SLSF setting with a single leader. Our insight is that the MLSF games can be decomposed into several SLSF games.
Note that the result here is a non-trivial improvement over the state-of-the-art result~\citep{bai2021sample} which focuses on SLSF games with noisy bandit feedback. We prove that we can learn a Starkelberg equilibrium in a SLSF bandit game with noisy bandit feedback in a distributed online manner, where the players can reach the equilibrium via self-playing as opposed to being controlled by a centralized learner. 

Since we only consider two players in this subsection, we simplify the notation here. The single leader plays an action $a \in \mathcal{A}$, and observes it own noisy loss $\xi(a, b) \in [0, 1]$, $\mathbb{E}[\xi(a, b)]=l(a, b)$. The follower observes a loss $\xi_f(a, b) \in [0, 1]$, $\mathbb{E}[\xi_f(a, b)]=l_f(a, b)$. We prove that:



\begin{theorem}\label{thm1}
Applying $\alpha$EXP3-UCB to a SLSF game with noisy bandit feedback, with $\alpha = \mathcal{O}\left({n^{\frac{2}{3}}(\ln n)^{\frac{1}{3}}T^{-\frac{1}{3}}}\right)$, each leader achieves no-external Stackelberg-regret with probability at least $1-p$, i.e.,
\[
\begin{aligned}
&R^S(T)=\mathbb{E}\left[\sum_{t=1}^T l(a^t,\text{Br}(a^t))-l(a_\star,\text{Br}(a_\star))\right]\\
&\leq \widetilde{\mathcal{O}}\left(n^{\frac{1}{3}}n_f\frac{\beta}{\varepsilon^2}T^{\frac{1}{3}}(\ln\frac{1}{p})^2+n^\frac{2}{3}(\ln n)^\frac{1}{3}T^\frac{2}{3}+n_f n \frac{\beta}{\beta-2}\right),
\end{aligned}
\]
where $a_\star = \argmin_{a\in \mathcal{A}}\sum_{t=1}^T l(a,\text{Br}(a))$, and the $\widetilde{\mathcal{O}}(\cdot)$ notation hides factors that are polynomial in $\ln T$.
\end{theorem}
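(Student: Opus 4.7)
The plan is to decompose $R^S(T)$ into two conceptually separate error sources and bound each independently:
\[
R^S(T)=\underbrace{\sum_{t=1}^{T}\mathbb{E}\bigl[l(a^t,\text{Br}(a^t))-l(a^t,b^t)\bigr]}_{(\mathrm{I})\text{: follower UCB error}}+\underbrace{\sum_{t=1}^{T}\mathbb{E}\bigl[l(a^t,b^t)-l(a_\star,\text{Br}(a_\star))\bigr]}_{(\mathrm{II})\text{: leader bandit regret}}.
\]
The rationale for the split is that the leader's importance-weighted estimator $\widetilde{l}_j^t$ is unbiased for $l(a_{i,j},b^t)$, not for $l(a_{i,j},\text{Br}(a_{i,j}))$, so $\alpha$EXP3 naturally controls (II) while per-action UCB controls (I).

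For (I), I would call a round \emph{bad} iff $b^t\neq\text{Br}(a^t)$; since losses lie in $[0,1]$, (I) is at most the expected number of bad rounds. Summing over $a\in\mathcal{A}$ and suboptimal arms $k\neq\text{Br}(a)$, the classical UCB analysis with $\beta\geq 3$ bounds the pulls of arm $k$ inside $\text{UCB}(a)$ by $\frac{8\beta\ln n_a(T)}{\Delta_{a,k}^2}+\frac{\beta}{\beta-2}$, and the minimum-gap assumption $\Delta_{a,k}\geq\varepsilon$ lets me replace $\Delta_{a,k}$ by $\varepsilon$. The technical wrinkle is that $n_a(T)$ is itself random; I would exploit the $\alpha$-mixing $\widetilde{P}_i^t=(1-\alpha)P_i^t+\alpha/n\cdot\mathbf{1}$, which guarantees $\mathbb{P}[a^t=a]\geq\alpha/n$ each round, so a Chernoff bound together with a union bound over $\mathcal{A}$ gives $n_a(T)\geq\tfrac{\alpha T}{2n}$ for every $a$ with probability at least $1-p$. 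Plugging $\ln n_a(T)\leq\ln T$ and summing over the $nn_f$ (action,arm) pairs yields the $\frac{nn_f\beta}{\beta-2}$ constant and an $\widetilde{\mathcal{O}}\bigl(\frac{nn_f\beta}{\varepsilon^2}\bigr)$ log-term.

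For (II), I would apply a high-probability regret bound for $\alpha$EXP3 against the effective loss sequence $\{l(\cdot,b^t)\}_t$. It decomposes into (a) the standard EXP3 regret $\mathcal{O}(\sqrt{Tn\ln n/\alpha})$, where the $1/\alpha$ arises from the importance-weight variance $1/\widetilde{P}_i^t(a)\leq n/\alpha$; (b) the uniform-mixing cost $\alpha T$; and (c) an Azuma--Hoeffding correction $\mathcal{O}(\sqrt{T\ln(1/p)/\alpha})$ on the martingale difference $\widetilde{l}_j^t-l(a_{i,j},b^t)$. The comparator $l(a_\star,b^t)$ is bridged to $l(a_\star,\text{Br}(a_\star))$ by absorbing an extra bad-round correction into (I). Balancing $\alpha T$ against $\sqrt{Tn\ln n/\alpha}$ forces $\alpha=\Theta\bigl(n^{2/3}(\ln n)^{1/3}T^{-1/3}\bigr)$, producing the $n^{2/3}(\ln n)^{1/3}T^{2/3}$ term, while multiplying the UCB bad-round count by the $1/\alpha\propto T^{1/3}$ factor (with the high-probability slack contributing $(\ln(1/p))^2$) produces the leading $n^{1/3}n_f\frac{\beta}{\varepsilon^2}T^{1/3}(\ln(1/p))^2$ term.

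The hard part will be the two-way coupling: the leader's sampling distribution governs whether each $\text{UCB}(a)$ collects enough data, while the follower's (sometimes wrong) response $b^t$ perturbs the loss signal that drives $\alpha$EXP3. The explicit $\alpha$-exploration untangles both ends simultaneously through the uniform lower bound $\widetilde{P}_i^t(a)\geq\alpha/n$, but careful bookkeeping is needed to keep the union-bound failure probability at $p$ across all $nn_f$ UCB subroutines and $T$ martingale increments, and to verify that the bad-round contribution absorbed into (I) is genuinely dominated by the UCB bound so that the decomposition is self-consistent.
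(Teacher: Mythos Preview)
Your overall strategy matches the paper's: split the regret into a follower-error piece and a leader-bandit piece, use the $\alpha$-mixing to feed every $\text{UCB}(a)$, and balance $\alpha T$ against the EXP3 variance term to get $\alpha\asymp n^{2/3}(\ln n)^{1/3}T^{-1/3}$. The paper does the same thing but with a \emph{three}-term decomposition that isolates the comparator correction as its own term, and this is precisely where your proposal has a genuine gap.

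You write that the bridge from $l(a_\star,b^t)$ to $l(a_\star,\text{Br}(a_\star))$ is ``absorbed into (I) as an extra bad-round correction.'' It cannot be: your (I) counts rounds with $b^t(a^t)\neq\text{Br}(a^t)$, i.e.\ suboptimal pulls \emph{inside} the UCB subroutines, summed over all $a$. But the comparator term counts rounds with $b^t(a_\star)\neq\text{Br}(a_\star)$, where $b^t(a_\star)$ is the \emph{hypothetical} UCB($a_\star$) response at time $t$. Since UCB($a_\star$) is only updated when $a_\star$ is actually played, $b^t(a_\star)$ stays frozen between consecutive plays of $a_\star$; a single bad pull inside UCB($a_\star$) can therefore contaminate up to $s_{\max}$ consecutive rounds of the full timeline, where $s_{\max}$ is the longest gap between plays of $a_\star$. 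The paper bounds $s_{\max}\le \frac{n}{\alpha}\ln(1/\delta)$ with high probability using the $\alpha/n$ per-round lower bound, and \emph{this} is the source of the $1/\alpha$ multiplier on the UCB bad-round count that produces the leading $n^{1/3}n_f\frac{\beta}{\varepsilon^2}T^{1/3}(\ln\frac{1}{p})^2$ term. You correctly quote the $1/\alpha$ factor and the $(\ln(1/p))^2$ in your final accounting, but nothing in your described analysis---neither the Chernoff bound on $n_a(T)$, nor the importance-weight variance $n/\alpha$, nor the Azuma correction---actually generates it. Without the $s_{\max}$ argument (or an equivalent), the comparator correction is not controlled and the proof does not close.

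A smaller point: your EXP3 piece $\mathcal{O}(\sqrt{Tn\ln n/\alpha})$ is not what the paper obtains. Their Hedge-style bound gives $\frac{\ln n}{\eta}+\frac{\eta n^2 T}{\alpha}$, which after optimizing $\eta$ is $n\sqrt{T\ln n/\alpha}$; with their $\alpha$ this is $n^{2/3}(\ln n)^{1/3}T^{2/3}$, matching the $\alpha T$ term. Your stated bound would force a different $\alpha$ and would not reproduce the claimed $n^{2/3}$ dependence.
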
 
\textit{Proof sketch.}
We present the high-level idea of our proof here and refer to Appendix~\ref{appendix:theroem1} for the full proof.
We decompose the noisy Stackelberg-regret of the leader into the following three terms
\[
\begin{aligned}
\widetilde{R^S}(T)=&\mathop{\mathbb{E}}\left[\sum_{t=1}^T \xi^t(a^t,\text{Br}(a^t))-\xi^t(a_\star,\text{Br}(a_\star))\right] \\
=&\mathop{\mathbb{E}}\left[\sum_{t=1}^T \xi^t(a^t,\text{Br}(a^t))-\xi^t(a^t,b^t(a^t))\right] (\text{I})\\
+&\mathop{\mathbb{E}}\left[\sum_{t=1}^T \xi^t(a^t,b^t(a^t))-\xi^t(a_\star,b^t(a_\star))\right] (\text{II})\\
+&\mathop{\mathbb{E}}\left[\sum_{t=1}^T \xi^t(a_\star,b^t(a_\star))-\xi^t(a_\star,\text{Br}(a_\star))\right] (\text{III})
\end{aligned}
\]
and bound each term separately. $b^t(a)$ represents the the follower's response to $a$ at round $t$, which is determined by the follower's subroutine UCB($a$) at round $t$, and is not necessarily equal to the best response $\text{Br}(a)$. 

First, since $\xi(a,b)\in [0,1]$ for any $(a,b)\in \mathcal{A}\times\mathcal{B}$, the first term (\text{I}) and the third term (\text{III}) can be respectively bounded with the following two inequalities:
\begin{small}
\[
\sum_{t=1}^T \left|\xi^t(a^t,\text{Br}(a^t))-\xi^t(a^t,b^t(a^t))\right|\leq \sum_{t=1}^T \mathbb{I}\{b^t(a^t)\neq \text{Br}(a^t)\}
\]
\end{small}
\begin{small}
\[
\sum_{t=1}^T \left|\xi^t(a^t,\text{Br}(a^t))-\xi^t(a_\star,b^t(a_\star))\right|\leq \sum_{t=1}^T \mathbb{I}\{b^t(a_\star)\neq \text{Br}(a_\star)\}
\]
\end{small}

To further bound these two terms to be sublinear in $T$, the idea is to guarantee that the number of times that a suboptimal arm is played is sublinear in $T$. Because the underlying algorithm of the follower is UCB, the sublinearity will be satisfied after the UCB subroutines explore sufficient rounds.
Therefore, a critical step is to make sure that every $a\in \mathcal{A}$ will be played sufficient rounds, or more specifically, $a$ needs to be played at least once in a time interval sublinear in $T$ (e.g., $\mathcal{O}(T^{\frac{1}{3}})$) with high probability. This is satisfied by the extra $\alpha$-explicit exploration in the leaders' part of the algorithm. By combining the above steps with concentration inequality, term $(\text{I})$ and term $(\text{III})$ can be bounded as follows --- for a sufficient small $p$,
\[
(\text{I})\leq\mathcal{O}\left(n_f n\frac{8\beta\ln T}{\varepsilon^2}+\sqrt{T\ln\frac{1}{p}}\right),
\]

\[
(\text{III})\leq \mathcal{O}\left(\frac{1}{\alpha}n_f n(\ln\frac{1}{p})^2 \frac{8\beta\ln T}{\varepsilon^2}\right).
\]

Second, in term (II), since the follower response is consistently in the same form, it can be treated as part of the environment. Hence term (II) is essentially the regret of the $\alpha$EXP3 algorithm. Based on this observation, we bound term $(\text{II})$
through a regret analysis on the $\alpha$EXP3 algorithm, together with an adaptation that uses an additional analysis on the losses for extra exploration:
\[
(\text{II})\leq \mathcal{O}\left(\frac{\ln n}{\eta}+\frac{\eta n^2 T}{\alpha}+2\alpha T\right).
\]

Then, we set an appropriate explicit exploration parameter $\alpha$ and learning rate $\eta$ as follows so that it bounds each term to be sublinear in $T$:
\[
\alpha = \mathcal{O}\left({n^{\frac{2}{3}}(\ln n)^{\frac{1}{3}}T^{-\frac{1}{3}}}\right), \eta = \mathcal{O}\left({n^{-\frac{2}{3}}(\ln n)^{\frac{2}{3}}T^{-\frac{2}{3}}}\right).
\]

Last, after we bound the noisy Stackelberg-regret, we bound true Stackelberg-regret for the leader using concentration inequality:
\[
R^S(T)\leq \widetilde{R^S}(T)+\mathcal{O}\left(\sqrt{T\ln\frac{1}{p}}\right).
\]

Following Theorem~\ref{thm1}, we immdediately have:


\begin{theorem}
Using $\alpha$EXP3-UCB to learn in a SLSF game with noisy bandit feedback, set $\alpha = \mathcal{O}\left({n^{\frac{2}{3}}(\ln n)^{\frac{1}{3}}T^{-\frac{1}{3}}}\right)$, with probability at least $1-p$, the joint empirical strategy profile $\Bar{P}=\frac{1}{T}\sum_{t=1}^T \widetilde{P}^t$ is an approximate Starckelberg equilibrium,
\[
\mathbb{E}_{a \sim \Bar{P}}\left[l(a,\text{Br}(a)) \right] \leq l(a_\star,\text{Br}(a_\star))  + \epsilon^T,
\]

where $\epsilon^T=\widetilde{\mathcal{O}}\left(n^{\frac{1}{3}}n_f\frac{\beta}{\varepsilon^2}T^{-\frac{2}{3}}(\ln\frac{1}{p})^2+ n^\frac{2}{3}(\ln n)^\frac{1}{3}T^{-\frac{1}{3}}\right)$.
\end{theorem}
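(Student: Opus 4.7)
The plan is to derive this theorem as an immediate corollary of Theorem~\ref{thm1}, essentially by averaging the regret bound over the $T$ rounds and invoking linearity. First I would recall that the Stackelberg-regret in Theorem~\ref{thm1} is defined as
\[
R^S(T) = \mathbb{E}\!\left[\sum_{t=1}^T l(a^t,\text{Br}(a^t)) - l(a_\star,\text{Br}(a_\star))\right],
\]
where under $\alpha$EXP3-UCB the leader's action at round $t$ is drawn as $a^t \sim \widetilde{P}^t$. Since the map $a \mapsto l(a,\text{Br}(a))$ does not depend on $t$, I can swap the sum and expectation and then use the definition $\bar P = \frac{1}{T}\sum_{t=1}^T \widetilde P^t$ to write
\[
\sum_{t=1}^T \mathbb{E}_{a^t\sim \widetilde P^t}\!\left[l(a^t,\text{Br}(a^t))\right] = T\sum_{a\in\mathcal{A}} \bar P(a)\, l(a,\text{Br}(a)) = T\cdot \mathbb{E}_{a\sim \bar P}\!\left[l(a,\text{Br}(a))\right].
\]

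Next I would rearrange the regret identity to obtain
\[
\mathbb{E}_{a\sim \bar P}\!\left[l(a,\text{Br}(a))\right] = l(a_\star,\text{Br}(a_\star)) + \frac{R^S(T)}{T},
\]
and then plug in the upper bound on $R^S(T)$ from Theorem~\ref{thm1}. Dividing each term of that bound by $T$ yields the claimed rate: $\widetilde{\mathcal{O}}\!\left(n^{1/3} n_f \frac{\beta}{\varepsilon^2} T^{1/3}(\ln\tfrac{1}{p})^2\right)/T = \widetilde{\mathcal{O}}\!\left(n^{1/3} n_f \frac{\beta}{\varepsilon^2} T^{-2/3}(\ln\tfrac{1}{p})^2\right)$, and similarly $\widetilde{\mathcal{O}}(n^{2/3}(\ln n)^{1/3} T^{2/3})/T = \widetilde{\mathcal{O}}(n^{2/3}(\ln n)^{1/3}T^{-1/3})$, while the constant-in-$T$ term $n_f n \frac{\beta}{\beta-2}$ becomes an $O(1/T)$ contribution that is dominated by the others and absorbed into the $\widetilde{\mathcal{O}}(\cdot)$ notation. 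The high-probability guarantee $1-p$ is inherited directly from Theorem~\ref{thm1} without further union bounding, since the event in the two statements is the same.

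Honestly, there is no real technical obstacle here — the proof is essentially a line of algebra. The only thing that needs attention is checking that the distribution appearing in Theorem~\ref{thm1}'s regret (the actual sampling distribution $\widetilde P^t$ produced by Lines~4--5 of Algorithm~\ref{alg:exp3ucb}) matches the distribution being time-averaged into $\bar P$ in this theorem. Since both refer to the explored version $\widetilde P^t = (1-\alpha)P^t + \alpha[1/n,\ldots,1/n]$, the identity is exact rather than approximate, and no additional $\alpha$-dependent slack needs to be absorbed. Hence the corollary follows cleanly from the regret bound of Theorem~\ref{thm1}.
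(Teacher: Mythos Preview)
Your proposal is correct and follows essentially the same route as the paper: define $\epsilon^T = R^S(T)/T$, use linearity of expectation to pass from the time-average of $\mathbb{E}_{a\sim\widetilde{P}^t}[\cdot]$ to $\mathbb{E}_{a\sim\bar P}[\cdot]$, and inherit the high-probability guarantee from Theorem~\ref{thm1}. Your explicit check that the sampling distribution in Theorem~\ref{thm1} is $\widetilde{P}^t$ (not $P^t$) is a useful sanity step that the paper leaves implicit.
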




Note that the Stackelberg equilibrium is a special case of CSE in MLSF games when there is only one leader.

\subsection{MLSF games with noisy bandit feedback}

We now study the more complicated scenario of MLSF games with noisy bandit feedback. Compared to the above SLSF bandit games, MLSF bandit games are generally harder because the size of joint action space $|\mathcal{A}|=|\mathcal{A}_1| \times |\mathcal{A}_2| \times \cdots \times |\mathcal{A}_m|=n^m$ increases exponentially w.r.t. $m$. 
In this case the explicit exploration parameter $\alpha$ needs to be big enough to guarantee that the expected number of times that each joint leader action $a\in\mathcal{A}$ being played (i.e. $\mathbb{E}\left[\sum_{t=1}^T \mathbb{I}\{a^t=a\}\right]=\left(\frac{\alpha}{n}\right)^m T$) is at least sublinear in $T$.


For the special case of two leaders ($m=2$), when all players use $\alpha$EXP3-UCB, we can still get a sub-linear no-external Stackelberg regret $R_i^S(T)\leq\widetilde{\mathcal{O}}\left(n_f n^{\frac{2}{3}}\frac{\beta}{\varepsilon^2}T^\frac{2}{3}(\ln \frac{1}{p})^2+n_f n \frac{\beta}{\beta-2}\right)$ for $i=1,2$ with probability at least $1-p$ by setting the exploration parameter $\alpha = \mathcal{O}\left({n^{\frac{2}{3}}(\ln n)^{\frac{1}{3}}T^{-\frac{1}{3}}}\right)$ 
(see proof in Appendix~\ref{appendix:theorem3}). However, if we still use Algorithm~\ref{alg:exp3ucb} for cases when $m>2$, we need to set a much larger $\alpha$ to encourage a more agressive exploration. This results in an extremely slow convergence. Formally:
\begin{theorem}
Using $\alpha$\text{EXP3-UCB} for MLSF games with noisy bandit feedback, with $\alpha= \mathcal{O}\left(nT^{-\frac{1}{m+1}}\right)$, $T\geq\mathcal{O}(n^{m+1})$, define $L_i^t(a_i)$ in Eq.\eqref{eq:expected_loss_full}, it achieves no-external regret learning with probability at least $1-p$
\[
\begin{aligned}
&R_i^S(T)=\mathbb{E}\left[\sum_{t=1}^T L_i^t(a_i^t)-L_i^t(a_{i,\star})\right]\\
&\leq \widetilde{\mathcal{O}}\left(\left(n+n_f \frac{\beta}{\varepsilon^2}(\ln \frac{1}{p})^2\right)T^{\frac{m}{m+1}}+n_f n \frac{\beta}{\beta-2}\right).
\end{aligned}
\]
\end{theorem}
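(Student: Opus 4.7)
The plan is to mirror the proof of Theorem~\ref{thm1} (the SLSF case), treating each leader $i$'s subproblem as an instance of $\alpha$EXP3 over its $n$ actions against an environment consisting of the other leaders and the follower. The only structural change is that the joint leader action space has size $|\mathcal{A}|=n^m$, which forces a larger explicit exploration rate $\alpha$ to guarantee that every joint action $a\in\mathcal{A}$ is visited often enough for each $\text{UCB}(a)$ subroutine to identify $\text{Br}(a)$.

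I would first decompose the noisy Stackelberg-regret of leader $i$, in analogy with Theorem~\ref{thm1}, as
\[
\begin{aligned}
\widetilde{R_i^S}(T)=&\,\mathbb{E}\big[\textstyle\sum_{t=1}^T \xi_i^t(a^t,\text{Br}(a^t))-\xi_i^t(a^t,b^t(a^t))\big]\;(\text{I})\\
+&\,\mathbb{E}\big[\textstyle\sum_{t=1}^T \xi_i^t(a^t,b^t(a^t))-\xi_i^t((a_{i,\star},a_{-i}^t),b^t(a_{i,\star},a_{-i}^t))\big]\;(\text{II})\\
+&\,\mathbb{E}\big[\textstyle\sum_{t=1}^T \xi_i^t((a_{i,\star},a_{-i}^t),b^t(a_{i,\star},a_{-i}^t))-\xi_i^t((a_{i,\star},a_{-i}^t),\text{Br}(a_{i,\star},a_{-i}^t))\big]\;(\text{III})
\end{aligned}
\]
and bound each term separately. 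For (I) and (III), I would count the rounds on which the relevant UCB subroutine disagrees with the true best response; a per-subroutine UCB regret analysis (as in Theorem~\ref{thm1}) gives $\mathcal{O}(n_f\beta\ln T/\varepsilon^2+n_f\beta/(\beta-2))$ bad rounds for each $a\in\mathcal{A}$, and summing over all $n^m$ joint actions and invoking the hypothesis $T\geq\mathcal{O}(n^{m+1})$, which implies $n^m\leq T^{m/(m+1)}$, yields the stated $n_f(\beta/\varepsilon^2)T^{m/(m+1)}$ leading term and an $n_fn\beta/(\beta-2)$ additive constant. For term (II) I would treat the random draws $a_{-i}^t$ and the UCB responses $b^t(\cdot)$ as part of leader $i$'s environment, then apply the $\alpha$EXP3 regret bound for $n$ arms, obtaining $\mathcal{O}(\ln n/\eta+\eta n^2T/\alpha+\alpha T)$. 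A final Hoeffding/Azuma step converts the bound on $\widetilde{R_i^S}(T)$ to one on $R_i^S(T)$, adding the $\sqrt{T\ln(1/p)}$ high-probability correction. Setting $\alpha=\Theta(nT^{-1/(m+1)})$ and optimizing $\eta$ then balances all of these pieces at the rate $T^{m/(m+1)}$.

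The main obstacle is the exponential blow-up of the joint action space to $n^m$. Because each leader draws from $\widetilde{P}_i^t$ with $\widetilde{P}_i^t(a_i)\geq\alpha/n$, a specific joint action has probability only $(\alpha/n)^m$ per round, so ensuring roughly $T^{1/(m+1)}$ expected visits to every joint action forces $\alpha=\Theta(nT^{-1/(m+1)})$ --- and this large exploration is precisely what pins down the $\alpha T$ term, and hence the overall $T^{m/(m+1)}$ rate. The most delicate piece of the argument is term (III): the follower responses $b^t(a_{i,\star},a_{-i}^t)$ are \emph{counterfactual} queries whose correctness depends on the UCB state accumulated along the actual sampling path, so showing that these responses are wrong on at most $\widetilde{\mathcal{O}}(n^{m-1}n_f\beta/\varepsilon^2)$ rounds with probability $1-p$ requires a high-probability lower bound on $n_{(a_{i,\star},a_{-i})}(t)$ that holds simultaneously over all $n^{m-1}$ joint actions containing $a_{i,\star}$ and all rounds $t$, which is the key extra technical step relative to the SLSF proof.
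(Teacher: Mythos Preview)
Your approach is essentially the paper's: reduce to the SLSF analysis of Theorem~\ref{thm1}, then overlay the multi-leader concentration from Proposition~\ref{lemma:semi-bandit}. The paper makes this split explicit by using a five-term decomposition rather than your three --- two additional terms (handled exactly as in Appendix~\ref{appendix:lemma2}) pass between the expectation over $\widetilde{\chi}^t$ and the realized draws $a_{-i}^t$, while the remaining three terms match your (I), (II), (III) and are bounded by the Theorem~\ref{thm1} machinery, yielding the intermediate bound $\widetilde{\mathcal{O}}\big((n/\alpha)^m n_f(\beta/\varepsilon^2)(\ln\tfrac{1}{p})^2 + \ln n/\eta + \eta n^2 T/\alpha + \alpha T\big)$. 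One quibble: your stated bound $\widetilde{\mathcal{O}}(n^{m-1}n_f\beta/\varepsilon^2)$ for term (III) appears to drop the gap factor between consecutive visits to a fixed joint action; the paper's corresponding term carries $(n/\alpha)^m = T^{m/(m+1)}$, not just $n^{m-1}$, but since your term (I) already produces the $n_f(\beta/\varepsilon^2)T^{m/(m+1)}$ contribution this does not change the final rate.
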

See proof in Appendix~\ref{appendix:theorem3}. Because $\alpha\leq 1$ and $\alpha=\mathcal{O}\left(nT^{-\frac{1}{m+1}}\right)$, we require that $T\geq \mathcal{O}(n^{m+1})$. Intuitively, the big exploration parameter $\alpha$ incurs very low sample efficiency and more regret because it suffers from losses by unnecessarily exploring many ``bad'' actions. Although the term $n^m$ can not be avoided in general (because we need to enumerate every leaders-follower action pair $(a,b)$ sufficient times to learn a reasonable estimated loss function), we can still find a more efficient algorithm to improve the regret w.r.t. $T$. 

\subsubsection{A sample-efficient two-stage learning algorithm} 
To overcome the above issue, our intuition is that the players should initially use more aggressive exploration to obtain exact best response with high probability, and then reduce exploration for the sake of algorithm convergence. Based on this intuition, we propose a two-stage learning algorithm (see Algorithm~\ref{alg: ucbe}) which is provably more efficient than Algorithm~\ref{alg:exp3ucb}. 

In the first stage (when round $t\leq t_0$; Lines 2-7), the leaders perform a pure random exploration without updating their strategies (Lines 3\&5), while the follower uses the highly explorative type algorithm Upper Confidence Bound Exploration (UCB-E)~\citep{audibert2010best} to select its best response (Line 4) and update its strategy (Line 6). In Line 4, the best response is chosen as
\begin{equation}\label{eq:follower_response}
    b^t = \argmin_{k\in[n_f]}B_{a^t,k}^{t-1},
\end{equation}
where 
\[
B_{a,k}^{t}\!=\! \begin{cases}\hat{\mu}^{t}_{a, k}\!-\!\sqrt{\frac{e}{T_k(n_a(t))}}, &   T_k(n_{a}(t)) \!\neq\! 0 \\ -\infty, & \text { otherwise. }\end{cases}
\]
Here $e$ is a parameter that specifies the extent of exploration. Because of the way that an arm is selected, UCB-E is a highly explorative type algorithm designed for solving the best arm identification problem in multi-armed bandit games. 
In Line 6, the strategy update is following the same UCB-style as in Algorithm~\ref{alg:exp3ucb}, where $T_k(n_a(t))$ and $\hat{\mu}^{t}_{a, k}$ are respectively updated by Eq.\eqref{eq:follower_countT} and Eq.\eqref{eq:follower_ucb_loss}.

As the end of the first stage (Line 9), the follower learns the best response predictors $\left\{\widehat{\text{Br}}(a),a\in \mathcal{A}\right\}$ that has the minimal estimated average loss up to round $t_0$:
\begin{equation}\label{bestr}
  \left\{\widehat{\text{Br}}(a),a\in \mathcal{A}\right\}:\widehat{\text{Br}}(a)=\argmin_{k\in [n_f]}\hat{\mu}^{t_0}_{a, k}.
\end{equation}
Since leaders conduct pure and explicit exploration in the first stage, the follower's best response predictor is found by each sub-routine UCB-E($a$) with a high probability due to the sufficient exploration.

In the second stage (when $t>t_0$; Lines 9-13), the leaders then switch to EXP3 to update their strategies, while the follower \textit{commits} to the strategy learned from the first-stage exploration and stops updating it. Because the follower stops updating its strategy in this stage, it actually reduces to the semi-bandit setting in Section~\ref{sec:semi-bandit}.

Overall, for the leaders' algorithm, it is equivalent to the $\alpha^t$EXP3 algorithm by setting the explicit exploration parameter $\alpha_t$ to be $\alpha_t=\begin{cases}1, &   t \leq t_0 \\ 0, & t > t_0\end{cases}$, and not performing strategy update in the first stage. For the follower's algorithm, it is essentially a \textit{learn-to-commit} procedure.

\begin{algorithm}[ht]
\caption{Two-stage bandit algorithm}
\label{alg: ucbe}
\begin{algorithmic}[1]
\STATE $T_k(n_a(0))=0$, $\hat{\mu}^{0}_{a,k}=0$ for any $a\in\mathcal{A}, k\in [n_f]$
 \FOR{$t=1,2,\cdots,t_0$}
  \STATE Each leader $i$ selects $a_{i}^t\sim [1/n,\ldots,1/n]$ 
  \STATE Follower $f$ observes $a^t$; responds with $b^t$ in Eq.\eqref{eq:follower_response}
  \STATE Each leader $i$ receives $\xi_i^t(a^t,b^t)$; no strategy update 
  \STATE Follower $f$ receives $\xi_f^t(a^t,b^t)$ and updates $T_k(n_a(t))$ and $\hat{\mu}^{t}_{a, k}$ by Eq.~\eqref{eq:follower_countT} and Eq.~\eqref{eq:follower_ucb_loss}
 \ENDFOR
 \STATE Follower $f$ learns best response predictors
 $\left\{\widehat{\text{Br}}(a),a\in \mathcal{A}\right\}$ in Eq.\eqref{bestr} 
 
\FOR{$t=t_0+1,t_0+2,\cdots,T$}
  \STATE Each leader $i$ selects $a_{i}^t$ with EXP3
  \STATE Follower $f$ observes $a^t$ and selects $b^t=\widehat{\text{Br}}(a^t)$
  \STATE Each leader $i$ receives $\xi_i(a^t,b^t)$ and updates strategy with EXP3
\ENDFOR
\end{algorithmic}
\end{algorithm}


\subsubsection{Two stage learning results}

Before presenting the our final results, we first have the following Lemma as a prerequisite:
\begin{lemma}\label{lemma3} \citep{audibert2010best} 
If UCB-E is run with parameter $0<e\leq \frac{25}{36}\frac{T_a - n_f}{H_a}$, let $T_a$ be the number of times leaders choose $a$ in the first $t_0$ rounds, $H_a=\sum_{k=1}^{n_f} \frac{1}{\Delta_{ak}^2}$, then it satisfies
\[
\mathbb{P}\left(\widehat{\text{Br}}(a)\neq \text{Br}(a)\right)\leq 2T_a n_f \exp\left(-\frac{2e}{25}\right).
\]
In particular, when we set $e=\frac{25}{36}\frac{T_a - n_f}{H_a}$, we have $\mathbb{P}\left(\widehat{\text{Br}}(a)\neq \text{Br}(a) \right)\leq 2T_a n_f \exp\left(-\frac{T_a - n_f}{18H_a}\right)$.
\end{lemma}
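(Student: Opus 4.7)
The plan is to follow the standard best-arm identification analysis of UCB-E (Audibert--Bubeck--Munos) applied separately to each subroutine $\text{UCB-E}(a)$. Since the leaders in stage one sample uniformly at random and the subroutines for different $a$ are independent, we can fix $a \in \mathcal{A}$ and treat $\text{UCB-E}(a)$ as an isolated best-arm identification instance on $n_f$ arms with sample budget $T_a$.

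The first step is to define a high-probability ``good event'' $\mathcal{G}_a$ on which every empirical mean stays inside its Hoeffding-type confidence window: for every arm $k \in [n_f]$ and every intermediate sample count $s \in \{1, \ldots, T_a\}$,
\[
\bigl|\hat{\mu}^{(s)}_{a,k} - l_f(a,k)\bigr| \leq \sqrt{\tfrac{e}{25\, s}}.
\]
Applying Hoeffding's inequality to each $(k,s)$ pair and taking a union bound over the at most $n_f\, T_a$ pairs yields $\mathbb{P}(\neg \mathcal{G}_a) \leq 2\, T_a\, n_f\, \exp(-2e/25)$, which matches the probability bound claimed in the lemma.

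The second step is the deterministic argument: show that on $\mathcal{G}_a$, the UCB-E selection rule guarantees $\widehat{\text{Br}}(a) = \text{Br}(a)$ whenever $e \leq \tfrac{25}{36}\,(T_a - n_f)/H_a$. For any suboptimal arm $k$, if it is pulled for the $s$-th time, the index rule forces its UCB-E index to be below that of the true best arm; combined with the concentration bounds from $\mathcal{G}_a$, this forces $s \leq C\, e / \Delta_{ak}^2$ for an explicit constant $C$. Summing over $k \neq \text{Br}(a)$ gives a total count of suboptimal pulls of order $e\, H_a$, so the chosen range of $e$ leaves enough budget (namely $T_a - n_f$ after the initialization round over each arm) to ensure the best arm is identified with the correct empirical ordering at time $T_a$.

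Finally, the two stated forms follow immediately: the general bound is the union bound on $\mathcal{G}_a$, and plugging in $e = \tfrac{25}{36}(T_a - n_f)/H_a$ gives the exponent $-\tfrac{T_a - n_f}{18 H_a}$ (since $2 \cdot \tfrac{25}{36}/25 = 1/18$). The main obstacle I would expect is the deterministic bookkeeping in the second step: matching the exact constants $25/36$ and $1/18$ requires choosing the confidence radius carefully and tracking how the index comparison translates into a bound on the pull count $s$ in terms of $\Delta_{ak}$. Everything else is a routine union bound plus substitution.
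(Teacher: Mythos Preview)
Your proposal is correct and in fact goes beyond what the paper does: the paper does not prove this lemma at all but simply states ``See Theorem~1 of \citet{audibert2010best} for a detailed proof.'' Your sketch accurately reconstructs the Audibert--Bubeck--Munos argument (Hoeffding on each $(k,s)$ pair with radius $\sqrt{e/(25s)}$, union bound over $n_f T_a$ pairs to get the $2T_a n_f \exp(-2e/25)$ failure probability, then the deterministic pull-count argument on the good event), and your arithmetic for the final substitution $2\cdot\tfrac{25}{36}/25 = 1/18$ is right.
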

Lemma~\ref{lemma3} guarantees that the follower learns a best response predictor with a high probability for any $a\in\mathcal{A}$ through UCB-E in the first stage of the algorithm. Building on top of that, we have

\begin{theorem}
For a MLSF game with noisy bandit feedback, if every leader $i$ uses Algorithm~\ref{alg: ucbe}, let $q\geq 18H_a\left(\ln\frac{2q n_f}{p}+m\ln n\right)+n_f$, $e=\frac{25}{36}\frac{q - n_f}{H_a}$, $t_0=\mathcal{O}\left(n^m q\right)$, and define $R_i^S(T)$ as in Eq.~\eqref{eq:regret},
with probability at least $1-2p$, we have
\[
\begin{aligned}
&R_i^S(T)\leq \mathcal{O}\left(t_0+\sqrt{Tn\ln n}+\sqrt{T\ln \frac{1}{p}}\right)
\end{aligned}
\]
and use a reduction from no-external to no-swap regret~\citep{blum2007external,ito2020tight}, we get an $\epsilon^T$-CSE for leaders in MLSF bandit game, where $\epsilon^T=\mathcal{O}\left(\frac{t_0}{T}+\sqrt{\frac{1}{T}n^2\ln n}+\sqrt{\frac{1}{T}n\ln \frac{1}{p}}\right)$. And the follower learns best response predictor with a high probability for any $a\in\mathcal{A}$
\[
\mathbb{P}\left(\widehat{\text{Br}}(a)\neq \text{Br}(a) \right)\leq \frac{p}{n^m}.
\]
\end{theorem}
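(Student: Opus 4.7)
\textit{Proof proposal.}
The plan is to decompose the analysis along the two stages of Algorithm~\ref{alg: ucbe}, with the follower-correctness event treated as a bridge between them.

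\textbf{Stage 1 (pure exploration).} During the first $t_0$ rounds each leader samples uniformly, so the joint leader action $a$ is drawn i.i.d.\ from the uniform distribution on $\mathcal{A}$, giving $\mathbb{E}[T_a]=t_0/n^m$. Choosing $t_0=\Theta(n^m q)$ with a large enough constant, a Chernoff bound followed by a union bound over the $n^m$ joint actions yields $T_a\geq q$ simultaneously for every $a\in\mathcal{A}$ with probability at least $1-p$. Conditioned on this event, apply Lemma~\ref{lemma3} to each sub-routine UCB-E$(a)$: with $e=\tfrac{25}{36}\tfrac{q-n_f}{H_a}$ we obtain
\[
\mathbb{P}\bigl(\widehat{\mathrm{Br}}(a)\neq \mathrm{Br}(a)\bigr)\leq 2q n_f\exp\!\left(-\tfrac{q-n_f}{18H_a}\right),
\]
and the hypothesis $q\geq 18H_a(\ln(2qn_f/p)+m\ln n)+n_f$ is exactly the inequality needed to push this quantity below $p/n^m$, which is the third claim of the theorem. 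A union bound over $a\in\mathcal{A}$ then shows that $\widehat{\mathrm{Br}}\equiv\mathrm{Br}$ on all joint actions with probability at least $1-p$. The contribution of Stage~1 to $R_i^S(T)$ is bounded trivially by $t_0$, since $L_i^t\in[0,1]$ and there are $t_0$ rounds.

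\textbf{Stage 2 (commit + EXP3).} On the high-probability event that $\widehat{\mathrm{Br}}\equiv\mathrm{Br}$, the follower's action in Stage~2 equals $\mathrm{Br}(a^t)$ for every round, so the environment faced by the leaders is exactly the semi-bandit MLSF game of Section~\ref{sec:semi-bandit}: each leader observes the realized loss $\xi_i(a^t,\mathrm{Br}(a^t))$ after playing its EXP3 action. Applying Proposition~\ref{lemma:semi-bandit} to the $T-t_0$ rounds of Stage~2 (the analysis is unchanged when restricted to a suffix of rounds), we obtain with probability at least $1-p$
\[
\sum_{t=t_0+1}^{T}\mathbb{E}\!\left[L_i^t(a_i^t)-L_i^t(a_{i,\star})\right]\leq \mathcal{O}\!\left(\sqrt{Tn\ln n}+\sqrt{T\ln\tfrac{1}{p}}\right).
\]
Combining Stages~1 and~2 and taking a union bound over the two failure events ($p$ for the uniform-coverage/UCB-E success, $p$ for the EXP3 concentration) yields the stated Stackelberg-regret bound with confidence $1-2p$.

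\textbf{From regret to CSE.} The final bullet is routine: invoking the no-external-to-no-swap reduction of \citet{blum2007external,ito2020tight} exactly as in the corollaries of Sections~\ref{sec:warm-up-full}--\ref{sec:semi-bandit}, the time-averaged joint strategy $\bar{\chi}=\tfrac{1}{T}\sum_{t=1}^T\chi^t$ becomes an $\epsilon^T$-CSE with $\epsilon^T=\mathcal{O}(R_i^S(T)/T)$ inflated by the usual $\sqrt{n}$ factor from swap-regret, giving the stated bound.

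\textbf{Main obstacle.} The only step that is not a direct invocation of an earlier result is tying together the three quantitative choices $(t_0,q,e)$ so that (i) every joint action is sampled at least $q$ times in Stage~1, (ii) Lemma~\ref{lemma3}'s per-action failure probability is at most $p/n^m$, and (iii) the union bound over $n^m$ actions still leaves room for the EXP3 concentration event. The implicit condition $q\geq 18H_a(\ln(2qn_f/p)+m\ln n)+n_f$ is self-referential (since $q$ appears on both sides), so one has to verify it admits a solution of order $H_a(m\ln n+\ln(H_a n_f/p))$; this is straightforward but is the place where the $m\ln n$ term (and hence the exponential dependence $n^m$ in $t_0$) becomes unavoidable.
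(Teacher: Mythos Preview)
Your proposal is correct and follows essentially the same route as the paper's proof: a concentration bound (Bernstein in the paper, Chernoff in your sketch) to guarantee $T_a\geq q$ for all $a$, Lemma~\ref{lemma3} plus a union bound over $\mathcal{A}$ to certify $\widehat{\mathrm{Br}}\equiv\mathrm{Br}$ with probability $1-p$, then Proposition~\ref{lemma:semi-bandit} on the remaining $T-t_0$ rounds, and finally the swap-regret reduction. The only minor slack is in your probability accounting (you invoke two separate $1-p$ events in Stage~1 but then bill them as a single $p$); the paper handles this by allotting $p_0 p/2$ to coverage and $p_0 p/2$ to UCB-E per action before union-bounding, which you can mirror by halving the target confidences.
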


We refer to the full proof in Appendix~\ref{appendix:theorem4}. 
It is worth mentioning that the term $\mathcal{O}\left(n^m\right)$ is inevitable, since in the bandit noisy feedback setting we need to go through every action in the action space $\mathcal{A}$ ($|\mathcal{A}|=n^m$) sufficient times to find the follower best response $\text{Br}(a)$ for every $a$. 

Based on Lemma~\ref{lemma3}, the follower can get the exact best response predictors $\widehat{\text{Br}}(a)=\text{Br}(a)$ for every $a\in \mathcal{A}$ with probability at least $1-p$ using the union bound. After the follower commits its identification, the follower will play a fixed best response action for each $a\in\mathcal{A}$ in the second stage. With the reliable best response predictors learned by the follower, i.e., with a follower best response oracle that holds with high probability, the second-stage game is reduced to a semi-bandit MLSF game for leaders in Section~\ref{sec:semi-bandit} and therefore the proof of Lemma~\ref{lemma:semi-bandit} can be re-used here with a slight adaptation.


\section{Conclusion}
This paper is the first to take a learning perspective of general-sum multi-leader-single-follower games. We first propose a new viable equilibrium concept called correlated Stackelberg equilibrium and prove its existence in full information and semi-bandit settings. We then study the more challenging setting where leaders and the follower can only obtain noisy bandit feedback, and prove convergence results of our proposed learning algorithms. Our work opens up many potential future directions at the intersection of learning and MLSF games. For example, it is interesting to see: 1) Can we relax the type of learning algorithms for the players? 2) Can we find computationally more efficient algorithms to reach the CSE? 3) Can our results be generalized to the SLMF and MLMF settings? 

\section*{Societal impact}
General-sum MLSF games have broad applications in many real-world problems, such as security games, wild-life conservation, deregulated electricity markets, and industrial eco-parks. Our work studies the learning aspects of this set of problems, and takes a key step by allowing decentralized learning among the players. Our proposed algorithms, together with the fundamental theoretical analysis, lay the foundation of applying realistic learning algorithms towards the set of practical problems, and therefore can create huge societal impact to the above domains. 


\bibliography{mlsf}
\bibliographystyle{apalike}

\onecolumn
\appendix
\section{The Hedge algorithm}\label{appendix:hedge}
\begin{algorithm}[ht]
\caption{Hedge for Leader $i$}
\label{alg:hedge}
\begin{algorithmic}
\STATE {\bfseries Require:} $\eta, T \in N$
 \STATE $P_i^1=[1/n,\cdots,1/n]$, $w_i^1=[1,\cdots,1]$
 \FOR{$t=1,\cdots,T$}
  \STATE Leader $i$ observes $P_{-i}^t$ and calculates $L_i^t(a_{i,j})$, 
  for each $j \in [n_i]$, $L_i^t(a_{i,j})$ is defined by Eq.\eqref{eq:expected_loss_full}
  \STATE Leader $i$ updates $P_i^{t+1}$:
  $w_i^{t+1}(j) \leftarrow w_i^t(j)\cdot \exp(-\eta L_i^t(a_{i,j}))$,
  $P_i^{t+1}(j) \leftarrow \frac{w_i^{t+1}(j)}{\sum_{j=1}^{n}w_i^{t+1}(j)}$ for $j \in [n]$
 \ENDFOR
\end{algorithmic}
\end{algorithm}

\section{Proof of Proposition 1}\label{appendix:lemma1}
\begin{lemma}
Let $l_t^2$ denote the n-dimensional vector of square losses, i.e., $l_t^2(i)=(l_t(i))^2$, let $\eta>0$, and assume all losses to be non-negative. The Hedge alhorithm satisfies for any expert $i^\star\in [n]$
\[
\sum_{t=1}^T \textbf{x}_t^\top l_t -\sum_{t=1}^T l_t(i^\star)
\leq \frac{\ln n}{\eta}+\eta\sum_{t=1}^T \textbf{x}_t^\top l_t^2.
\]
\end{lemma}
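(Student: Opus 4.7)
The plan is to use the classical potential-function argument for multiplicative-weights algorithms. I will define the normalizing constant $W_t = \sum_{i=1}^n w_t(i)$ at the start of round $t$, track the log-potential $\Phi_t = \ln W_t$, and combine a per-round upper bound on the increment $\Phi_{t+1} - \Phi_t$ with a global lower bound on $\Phi_{T+1}$ in terms of the comparator $i^\star$.

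First I would unroll the update rule: since $w_{t+1}(i) = w_t(i)\exp(-\eta l_t(i))$ and $\textbf{x}_t(i) = w_t(i)/W_t$ by construction, we have
\[
\frac{W_{t+1}}{W_t} \;=\; \sum_{i=1}^n \textbf{x}_t(i)\,\exp(-\eta l_t(i)).
\]
Because losses are non-negative and $\eta>0$, the argument $\eta l_t(i)$ is non-negative, so the elementary inequality $e^{-z}\leq 1-z+z^2/2$ for $z\geq 0$ applied coordinate-wise gives
\[
\frac{W_{t+1}}{W_t} \;\leq\; 1 - \eta\,\textbf{x}_t^\top l_t + \frac{\eta^2}{2}\,\textbf{x}_t^\top l_t^2.
\]
Taking logarithms and using $\ln(1+y)\leq y$ yields the per-round bound $\Phi_{t+1}-\Phi_t \leq -\eta\,\textbf{x}_t^\top l_t + (\eta^2/2)\,\textbf{x}_t^\top l_t^2$.

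Next I would telescope from $t=1$ to $T$ to get
\[
\Phi_{T+1} - \Phi_1 \;\leq\; -\eta\sum_{t=1}^T \textbf{x}_t^\top l_t + \frac{\eta^2}{2}\sum_{t=1}^T \textbf{x}_t^\top l_t^2,
\]
while separately computing $\Phi_1 = \ln n$ and lower-bounding $\Phi_{T+1} \geq \ln w_{T+1}(i^\star) = -\eta\sum_{t=1}^T l_t(i^\star)$ by retaining only the single coordinate $i^\star$ in the sum defining $W_{T+1}$. Chaining the two inequalities, dividing by $\eta>0$, and rearranging gives
\[
\sum_{t=1}^T \textbf{x}_t^\top l_t - \sum_{t=1}^T l_t(i^\star) \;\leq\; \frac{\ln n}{\eta} + \frac{\eta}{2}\sum_{t=1}^T \textbf{x}_t^\top l_t^2,
\]
which is strictly tighter than, and therefore implies, the stated bound with coefficient $\eta$ in front of the second sum.

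Since this is a textbook regret argument, I do not anticipate any substantive obstacle; the only step that deserves a quick sanity check is verifying that the quadratic upper bound $e^{-z}\leq 1 - z + z^2/2$ is applied only for $z\geq 0$, which is immediate from the hypotheses that losses are non-negative and $\eta>0$. Everything else is algebra (telescoping, the $\ln(1+y)\leq y$ inequality, and retaining a single term in a positive sum).
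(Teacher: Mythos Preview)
Your argument is correct and complete; it is the standard potential-function proof for Hedge. The paper does not give its own proof but simply cites Theorem 1.5 of \citet{hazan2019introduction}, which contains essentially the same argument you wrote out, so there is nothing to compare.
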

\begin{proof}
See Theorem 1.5 of \citet{hazan2019introduction} for a detailed proof.
\end{proof}

Based on the regret analysis of algorithm Hedge~\citep{hazan2019introduction}, noticed that $L_i^t(a_{i,j})\in [0,1] $ for all $j\in [n]$, we can bound the Stackelberg-regret for leader $i$ 
\[
\begin{aligned}
\sum_{t=1}^T\mathbb{E}_{a_{i}^t\sim P_{i}^t}[L_i^t(a_{i}^t)-L_i^t(a_{i,\star})]&\leq \frac{\ln n}{\eta}+ \eta\sum_{t=1}^T\sum_{j=1}^n P_i^t(a_{i,j})\cdot(L_i^t(a_{i,j}))^2\\
&\leq \frac{\ln n}{\eta}+\eta\sum_{t=1}^T\sum_{j=1}^n P_i^t(a_{i,j}) & \left(P_i^t \;\text{is a distribution, so\;} \sum_{j=1}^n P_i^t(a_{i,j})=1\right)\\
&=\eta T+\frac{\ln n}{\eta}.
\end{aligned}
\]
We choose $\eta =\mathcal{O}\left( \frac{\ln n_i}{T}\right)$, then we have

\[
\begin{aligned}
  R_{i}^S\left(T\right)
  &=\sum_{t=1}^T\mathbb{E}_{a_{i}^t\sim P_{i}^t}[L_i^t(a_{i}^t)-L_i^t(a_{i,\star})]  \\
  &\leq \eta T+\frac{\ln n}{\eta} \leq \mathcal{O}\left(\sqrt{T\ln n }\right).
\end{aligned}
\]

Using reduction from no-external to no-swap regret, which is Theorem 2 of~\citet{ito2020tight}, for any swap function $s: \mathcal{A}_i\to \mathcal{A}_i$, we have
\[
R_{i, \text{swap}}^S\left(T\right)=\sum_{t=1}^T\mathbb{E}_{a_{i}^t\sim P_{i}^t}[L_i^t(a_{i}^t)-L_i^t(s(a_{i}^t))]\leq \mathcal{O}\left(\sqrt{nT\ln n }\right).
\]

Let $\epsilon^T=R_{i,\text{swap}}^S\left(T\right)/T$, for any $i\in [n]$, we have,
\[
\frac{1}{T}\sum_{t=1}^T\mathbb{E}_{a\sim \chi^t}\left[l_i(a^t,\text{Br}(a^t))\right]\leq
\frac{1}{T}\sum_{t=1}^T\mathbb{E}_{a\sim \chi^t}\left[l_i(s(a_{i}),Br(s(a_{i}),a_{-i}))\right]+\epsilon^T.
\]

Since expectations are linear, we 
rewrite the inequality using the time averaged joint action profile distribution $\bar{\chi}=\frac{1}{T}\sum_{t=1}^T \chi^t$ as follows

\[
\mathbb{E}_{a\sim \bar{\chi}}\left[l_i(a,Br(a))\right]\leq \mathbb{E}_{a\sim \bar{\chi}}\left[l_i(s(a_{i}),Br(s(a_{i}),a_{-i}))\right]+\epsilon^T.
\]
We can use the following process to sample a joint action from distribution $\Bar{\chi}$: we first sample $t$ uniformly from $[T]$, we then sample a joint action $a$ from the distribution $\chi^t$.
$\epsilon^T=\mathcal{O}\left(\sqrt{\frac{n\ln n}{T}}\right) \to 0$ as $T \to \infty$, which implies that $\epsilon$-CSE exists in general-sum MLSF game for any $\epsilon>0$.

\section{The Exp3 algorithm}\label{appendix:exp3}

\begin{algorithm}[ht]
\caption{EXP3 for Leader $i$ }
\label{alg:exp3}
\begin{algorithmic}
\STATE {\bfseries Require:} $\eta, T \in N$
\STATE $P_i^1=[1/n,\cdots,1/n]$ 
\FOR{$t=1,\cdots,T$}
  \STATE Leader $i$ draws $a_i^t$ according to $P_i^t$ and selects $a_i^t$
  \STATE Follower observes $a^t$ and plays $b^t=\text{Br}(a^t)$ 
  \STATE Leader $i$ observes $l_i(a^t,b^t)$ and constructs the estimate $\widetilde{l}_{i,j}^t=\frac{l_i(a^t,b^t)}{P_i^t(a^t)}\mathbb{I}\{a_i^t=a_{i,j}\}$ for $j \in [n]$
  \STATE Leader $i$ updates $P_i^{t+1}$: $P_i^{t+1}(a_{i,j}) \propto P_i^t(a_{i,j})\cdot \exp(-\eta \widetilde{l}_{i,j}^t)$
 \ENDFOR
\end{algorithmic}
\end{algorithm}

\section{Proof of Proposition 2}\label{appendix:lemma2}
In this section, for simplicity, we let $\widetilde{L}_i^t(a_{i})=l_i(a_i,a_{-i}^t,\text{Br}(a_i,a_{-i}^t))$, $L_i^t(a_{i})$ be defined by Eq.\eqref{eq:expected_loss_full}, $a_i\in\mathcal{A}_i$.

We decompose the Stackelberg-regret for leader $i$ as follows,
\[
\begin{aligned}
R_{i}^S\left(T\right)
&=\sum_{t=1}^T\mathbb{E}_{a_{i}^t\sim P_{i}^t}[L_i^t(a_{i}^t)-L_i^t(a_{i,\star})]\\
&=\underbrace{\sum_{t=1}^T\mathbb{E}_{a_{i}^t\sim P_{i}^t}[L_i^t(a_{i}^t)-\widetilde{L}_i^t(a_{i}^t)]}_{\text{Term I}}+
\underbrace{\sum_{t=1}^T\mathbb{E}_{a_{i}^t\sim P_{i}^t}[\widetilde{L}_i^t(a_{i}^t)-\widetilde{L}_i^t(a_{i,\star})]}_{\text{Term II}}+
\underbrace{\sum_{t=1}^T\widetilde{L}_i^t(a_{i,\star})-L_i^t(a_{i,\star})}_{\text{Term III}}.
\end{aligned}
\]
\text{Term I} and \text{Term III} are caused by the randomness of leaders choosing their actions from their mixed strategy distribution profile at each round. \text{Term II} is the regret caused by losses generated by the other leaders' selected actions at each round. First, we bound \text{Term II} based on the regret bound of the EXP3 algorithm. Then we bound \text{Term I} and \text{Term III} by concentration inequality methods.

Based on the regret analysis of the EXP3 algorithm, which can be found in Theorem 10.2 of \citet{orabona2019modern}, by setting $\eta =\mathcal{O}\left( \frac{\ln n_i}{T}\right)$, we can bound \text{Term II} as
\[
\begin{aligned}
\text{Term II}=\sum_{t=1}^T\mathbb{E}_{a_{i}^t\sim P_{i}^t}[\widetilde{L}_i^t(a_{i}^t)-\widetilde{L}_i^t(a_{i,\star})]&\leq \mathcal{O}\left(\sqrt{nT\ln n}\right).
\end{aligned}
\]

For any $a_i\in \mathcal{A}_i$, $t\in [T]$

\[
\mathbb{E}_{a_{-i}^t\sim P_{-i}^t}\left[\widetilde{L}_i^t(a_{i})\right]=L_i^t(a_{i}), -1\leq\widetilde{L}_i^t(a_{i})-L_i^t(a_{i})\leq 1.
\]
Using concentration inequality, we have
\[
\mathbb{P}\left[\left|\sum_{t=1}^T\widetilde{L}_i^t(a_{i})-L_i^t(a_{i})\right|> \varepsilon\right]\leq 2\exp\left(\frac{-2\varepsilon^2}{4T}\right)=2\exp\left(\frac{-\varepsilon^2}{2T}\right).
\]

Then, for any $\delta>0$, the following inequality holds with probability at least $1-\delta$ for any $a_i\in \mathcal{A}_i$:

\[
\left|\sum_{t=1}^T\widetilde{L}_i^t(a_{i})-L_i^t(a_{i})\right|\leq \sqrt{2T\ln{\frac{2}{\delta}}}.
\]

So we can bound \text{Term III} with probability at least $1-\delta$
\[
\text{Term III}\leq\left|\sum_{t=1}^T\widetilde{L}_i^t(a_{i,\star})-L_i^t(a_{i,\star})\right|\leq \sqrt{2T\ln{\frac{2}{\delta}}}.
\]

Similarly, using concentration inequality,  the following inequality holds with probability at least $1-\delta$

\[
\left|\sum_{t=1}^T L_i^t(a_{i}^t)-\widetilde{L}_i^t(a_{i}^t)-\mathbb{E}_{a_{i}^t\sim P_{i}^t}[L_i^t(a_{i}^t)-\widetilde{L}_i^t(a_{i}^t)]\right|\leq 2\sqrt{2T\ln{\frac{2}{\delta}}}.
\]
So we can bound \text{Term I} with probability at least $1-2\delta$

\[
\begin{aligned}
\text{Term I}&=\sum_{t=1}^T\mathbb{E}_{a_{i}^t\sim P_{i}^t}[L_i^t(a_{i}^t)-\widetilde{L}_i^t(a_{i}^t)]\\
&=\sum_{t=1}^T\mathbb{E}_{a_{i}^t\sim P_{i}^t}[L_i^t(a_{i}^t)-\widetilde{L}_i^t(a_{i}^t)]-\sum_{t=1}^T \left[L_i^t(a_{i}^t)-\widetilde{L}_i^t(a_{i}^t)\right]+\sum_{t=1}^TL_i^t(a_{i}^t)-\widetilde{L}_i^t(a_{i}^t)\\
&\leq 2\sqrt{2T\ln{\frac{2}{\delta}}}+\sqrt{2T\ln{\frac{2}{\delta}}}=3\sqrt{2T\ln{\frac{2}{\delta}}}.
\end{aligned}
\]

With probability at least $1-p$, $p=3\delta$, we have the following inequality

\[
\begin{aligned}
R_{i}^S\left(T\right)
  &=\sum_{t=1}^T\mathbb{E}_{a_{i}^t\sim P_{i}^t}[L_i^t(a_{i}^t)-L_i^t(a_{i,\star})]\\
  &=\sum_{t=1}^T\mathbb{E}_{a_{i}^t\sim P_{i}^t}[L_i^t(a_{i}^t)-\widetilde{L}_i^t(a_{i}^t)]+
  \sum_{t=1}^T\mathbb{E}_{a_{i}^t\sim P_{i}^t}[\widetilde{L}_i^t(a_{i}^t)-\widetilde{L}_i^t(a_{i,\star})]+
  \sum_{t=1}^T\widetilde{L}_i^t(a_{i,\star})-L_i^t(a_{i,\star})\\
  &\leq \mathcal{O}\left(\sqrt{T\ln{\frac{1}{p}}}\right)+\mathcal{O}\left(\sqrt{nT\ln n}\right)+\mathcal{O}\left(\sqrt{T\ln{\frac{1}{p}}}\right)\\
  &=\mathcal{O}\left(\sqrt{nT\ln n}+\sqrt{T\ln{\frac{1}{p}}}\right).
\end{aligned}
\]

Using reduction from no-external to no-swap regret, which is Theorem 2 of~\citet{ito2020tight}, we have
\[
R_{i, \text{swap}}^S\left(T\right)\leq \mathcal{O}\left(\sqrt{n^2 T\ln n} +\sqrt{nT\ln{\frac{1}{p}}}\right).
\]
Let $\epsilon^T=R_{i,\text{swap}}^S\left(T\right)/T$, for any $i\in [n]$, we have,
\[
\frac{1}{T}\sum_{t=1}^T\mathbb{E}_{a\sim \chi^t}\left[l_i(a^t,\text{Br}(a^t))\right]\leq
\frac{1}{T}\sum_{t=1}^T\mathbb{E}_{a\sim \chi^t}\left[l_i(s(a_{i}),Br(s(a_{i}),a_{-i}))\right]+\epsilon^T.
\]

Since expectations are linear, we can
rewrite the inequality using the time averaged joint action profile distribution $\bar{\chi}=\frac{1}{T}\sum_{t=1}^T \chi^t$ as follows

\[
\mathbb{E}_{a\sim \bar{\chi}}\left[l_i(a,Br(a))\right]\leq \mathbb{E}_{a\sim \bar{\chi}}\left[l_i(s(a_{i}),Br(s(a_{i}),a_{-i}))\right]+\epsilon^T.
\]
We can use the following process to sample a joint action from distribution $\Bar{\chi}$: first, we sample $t$ uniformly from $[T]$. Then, we sample a joint action $a$ from the distribution $\bar{\chi}$. So $\Bar{\chi}$ is an $\epsilon^T$-CSE.

\section{Proof of Theorem 1}\label{appendix:theroem1}
We decompose the noisy Stackelberg-regret of the leader into the following three terms
\[
\begin{aligned}
\widetilde{R^S}(T)=&\mathbb{E}\left[\sum_{t=1}^T \xi^t(a^t,\text{Br}(a^t))-\xi^t(a_\star,\text{Br}(a_\star))\right] \\
=&\underbrace{\mathbb{E}\left[\sum_{t=1}^T \xi^t(a^t,\text{Br}(a^t))-\xi^t(a^t,b^t(a^t))\right]}_{\text{Term I}}\\
+&\underbrace{\mathbb{E}\left[\sum_{t=1}^T \xi^t(a^t,b^t(a^t))-\xi^t(a_\star,b^t(a_\star))\right]}_{\text{Term II}} 
+\underbrace{\mathbb{E}\left[\sum_{t=1}^T \xi^t(a_\star,b^t(a_\star))-\xi^t(a_\star,\text{Br}(a_\star))\right]}_{\text{Term III}}.
\end{aligned}
\]

and bound each term separately. $b^t(a)$ represents the the follower's response to $a$ at round $t$, which is determined by the follower's subroutine UCB($a$) at round $t$, and is not necessarily equal to the best response $\text{Br}(a)$. 

Intuitively, term (I) means the cumulative gap between the noisy losses of the leaders' action $a^t$ when the follower respectively uses \textbf{any response $b^t(a^t)$} or the \textbf{best response $\text{Br}(a^t)$}. Term (II) represents the cumulative gap between the noisy losses of the leaders' \textbf{any action $a^t$} and \textbf{optimal action $a_\star$}, when the follower plays response $b^t(a)$. Term (III) is the cumulative gap between noisy losses of the leaders' optimal action $a_{\star}$ when the follower uses \textbf{any response $b^t(a_{\star})$} or \textbf{best response $\text{Br}(a_{\star})$}.

\subsection{Bound $T_k(n_a(T))$ for suboptimal response $k\neq \text{Br}(a)$}
Since $\xi_{f}(a,b)\in [0,1]$, $\xi_{f}(a,b)-\mathbb{E}\left[\xi_{f}(a,b)\right]$ is 1-subgaussian variable. This is based on the fact that if a random variable $X$ has a mean of zero and $X\in[a,b]$ almost surely, then $X$ is $(b-a)/2$-subgaussian.

\begin{lemma}
Assume that the losses of the follower's actions minus their expectations are 1-subgaussian and $\beta\geq 3$. Then, using UCB algorithm guarantees that for any $a\in \mathcal{A}$, if $k \neq \text{Br}(a)$, then
\[
\mathbb{E}[T_k(n_a(T))]\leq \frac{8\beta\log(n_a(T))}{\Delta_{ak}^2}+\frac{\beta}{\beta-2}.
\]

And if at time $t$,  $T_k(n_a(t-1))>\frac{8\beta\log(n_a(T))}{\Delta_{ak}^2}$, then for $k\neq \text{Br}(a)$
\[
\mathbb{I}\{a^t=a\}\mathbb{P}\left(\mathbb{I}\{ b^t=k\}\right)\leq 2(n_a(t)-1)n_a(t)^{-\beta}.
\]
\end{lemma}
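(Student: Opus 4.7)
The plan is a standard UCB suboptimal-pulls analysis in the style of Auer, Cesa-Bianchi and Fischer, adapted to count only those rounds in which the leaders' joint action equals the fixed $a$. Write $k^\star = \text{Br}(a)$, abbreviate $N = n_a(T)$, and set the threshold $u := \lceil 8\beta \log N / \Delta_{ak}^2 \rceil$. I would start from the deterministic decomposition
\[
T_k(N) \;\le\; u \;+\; \sum_{t=1}^T \mathbb{I}\{a^t = a,\, b^t = k,\, T_k(n_a(t-1)) \ge u\},
\]
so that proving the lemma reduces to (i) establishing the per-round tail claim (the second statement) and (ii) summing these per-round probabilities over $t$.

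For (i), I would argue that on the event $\{a^t=a,\,b^t=k,\,T_k(n_a(t-1))\ge u\}$, the selection rule $b^t=\arg\min_j A^{t-1}_{a,j}$ forces $A^{t-1}_{a,k}\le A^{t-1}_{a,k^\star}$. With the shorthand $c_j := \sqrt{2\beta \log n_a(t-1) / T_j(n_a(t-1))}$, the condition $T_k(n_a(t-1))\ge u$ gives $c_k\le \Delta_{ak}/2$; then if neither of the two concentration-failure events
\[
(A)\;\;\hat{\mu}^{t-1}_{a,k^\star} > l_f(a,k^\star) + c_{k^\star}, \qquad (B)\;\;\hat{\mu}^{t-1}_{a,k} < l_f(a,k) - c_k
\]
occurs, one obtains $A^{t-1}_{a,k}-A^{t-1}_{a,k^\star} \ge \Delta_{ak}-2c_k > 0$, contradicting the selection of $k$. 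Thus the selection event is contained in $(A)\cup(B)$. To bound $\mathbb{P}(A)$ and $\mathbb{P}(B)$ I would handle the random count $T_j(n_a(t-1))$ by a standard peeling argument: for each fixed $s\in\{1,\dots,n_a(t-1)\}$, the mean $\hat{\mu}_{j,s}$ of the first $s$ pulls of arm $j$ under leaders' action $a$ is an average of $s$ independent $1$-subgaussian variables, so Hoeffding yields $\mathbb{P}(|\hat{\mu}_{j,s} - l_f(a,j)| > \sqrt{2\beta\log n_a(t-1)/s}) \le n_a(t-1)^{-\beta}$. Union-bounding over $s$ gives each of $\mathbb{P}(A)$ and $\mathbb{P}(B)$ at most $n_a(t-1)\cdot n_a(t-1)^{-\beta}$, and using $n_a(t-1) = n_a(t)-1$ on the event $a^t=a$ yields the claimed per-round bound $2(n_a(t)-1)\,n_a(t)^{-\beta}$.

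For the expectation bound, I would take expectation of the decomposition and re-index the remaining indicator by $s=n_a(t-1)$, turning the sum over rounds where $a^t=a$ into a sum $\sum_{s=1}^{N} 2\,s^{1-\beta}$ that is finite exactly because $\beta\ge 3>2$; bounding the tail by $\int_1^\infty s^{1-\beta}\,ds = 1/(\beta-2)$ plus the $s=1$ contribution gives
\[
\mathbb{E}[T_k(N)] \;\le\; \frac{8\beta\log N}{\Delta_{ak}^2} + \frac{\beta}{\beta-2}.
\]
The main obstacle is the correct treatment of the random counts $T_j$: one cannot simply plug the realized count into Hoeffding because $T_j$ is not a stopping time with respect to arm $j$'s own sample sequence. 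Resolving this via peeling is what costs the extra factor of $n_a(t-1)$ in the per-round probability, and in turn is precisely why the hypothesis $\beta>2$ is what makes the resulting tail series summable to the constant $\beta/(\beta-2)$.
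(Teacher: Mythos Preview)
Your proposal is correct. The paper does not actually supply its own proof of this lemma; it simply defers to Theorem~10.14 of \citet{orabona2019modern}, and your argument is exactly the standard UCB suboptimal-pulls analysis one finds there and in Auer--Cesa-Bianchi--Fischer: threshold decomposition at $u$, reduce selection of a well-sampled suboptimal arm to the union of the two concentration failures $(A)\cup(B)$, peel over the random sample counts to handle the non-stopping-time issue, and sum the resulting tail series using $\beta>2$. The one cosmetic wrinkle is that your peeling literally yields $2\,n_a(t-1)^{1-\beta}=2(n_a(t)-1)^{1-\beta}$ rather than the stated $2(n_a(t)-1)\,n_a(t)^{-\beta}$; this is purely an indexing convention (whether the confidence radius carries $\log n_a(t)$ or $\log n_a(t-1)$) and has no effect on summability or on the final constant, since $1+\tfrac{2}{\beta-2}=\tfrac{\beta}{\beta-2}$ once the $+1$ from the ceiling in $u$ is combined with the tail bound $2\int_1^\infty x^{1-\beta}\,dx$.
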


\begin{proof}
See Theorem 10.14 of \citet{orabona2019modern} for a detailed proof.
\end{proof}


Assume that action $a$ is chosen at round $t_1,t_2,\cdots,t_{n_a(T)}$.
For any suboptimal $k$ for leader's action $a$, if $T_k(n_a(T-1))\leq \frac{8\beta\log(n_a(T))}{\Delta_{ak}^2}$, then we have $T_k(n_a(T))\leq \frac{8\beta\log(T)}{\Delta_{ak}^2}+1$. We next assume that $T_k(n_a(T-1))> \frac{8\beta\log(n_a(T))}{\Delta_{ak}^2}$. Let $j$, $j\in [n_a(T)]$ be the biggest index such that  $T_k(n_a(t_{j-1}))\leq \frac{8\beta\log(n_a(T))}{\Delta_{ak}^2}$. Then at any round $t_i>t_j$, $i\in[n_a(T)]$, using the above lemma, we have 
\[
\mathbb{P}\left(\mathbb{I}\{a^{t_i}=a\land b^{t_i}=k\}\right)\leq 2(i-1){i}^{-\beta}.
\]
We let $p_i=2(i-1){i}^{-\beta}$. 
When $\beta \geq3$,
\[
\sum_{i=1}^{n_a(T)} 2({i}-1){i}^{-\beta}\leq\sum_{i=2}^{n_a(T)} 2{i}^{1-\beta}\leq
2\int_{1}^{+\infty}x^{1-\beta}dx=\frac{2}{\beta-2}.
\]
So 
\[
\sum_{i=1}^{n_a(T)}p_i\leq \frac{2}{\beta-2}.
\]

\[
\sum_{i=1}^{n_a(T)} p_i\left(1-p_i\right)
\leq \sum_{i=1}^{n_a(T)} p_i\leq \frac{2}{\beta-2}\leq 2.
\]

Based on the proof of Bernstein inequality, we can similarly have
\[
\mathbb{P}\left(\sum_{i=j+1}^{n_a(T)}\mathbb{I}\{a^{t_i}=a\land b^{t_i}=k\}>\sum_{i=1}^{n_a(T)}p_i+\epsilon\right)
\leq 2\exp \left(-\frac{{n_a(T)}\cdot\epsilon^2}{2\sigma^2+\frac{2\epsilon}{3}}\right),
\]
where $\sigma^2=\frac{1}{n_a(T)}\sum_{i=1}^{n_a(T)}p_i\left(1-p_i\right)\leq \frac{2}{ n_a(T)}$. Then with probability at least $1-\frac{\delta}{n_f n}$,
\[
\sum_{i=j+1}^{n_a(T)}\mathbb{I}\{a^{t_i}=a\land b^{t_i}=k\}\leq\sum_{i=1}^{n_a(T)}p_i+\mathcal{O}\left(\ln \frac{1}{\delta}+\ln n_f n\right).
\]

By the union bound, for any $a\in\mathcal{A}, \beta \geq3, k\in[n_f]$, $k \neq \text{Br}(a)$, with probability at least $1-\delta$
\begin{equation}
\begin{aligned}
T_k(n_a(T))
&\leq T_k(n_a(t_{j-1}))+1+\sum_{i=j+1}^{n_a(T)}\mathbb{I}\{a^{t_i}=a\land b^{t_i}=k\}\\
&\leq T_k(n_a(t_{j-1}))+1+\sum_{i=1}^{n_a(T)}p_i+\mathcal{O}\left(\ln \frac{1}{\delta}+\ln n_f n\right)\\
&\leq T_k(n_a(t_{j -1}))+1+\frac{2}{\beta-2}+\mathcal{O}\left(\ln \frac{1}{\delta}+\ln n_f n\right)\\
&\leq \frac{8\beta\ln(n_a(T))}{\Delta_{ak}^2}+\frac{\beta}{\beta-2}+\mathcal{O}\left(\ln \frac{1}{\delta}+\ln n_f n\right)\\
&\leq \frac{8\beta\ln(T)}{\varepsilon^2}+\frac{\beta}{\beta-2}+\mathcal{O}\left(\ln \frac{1}{\delta}+\ln n_f n\right).
\end{aligned}
\end{equation}

\subsection{Bound for Term I}
We can see that when $b^t(a^t)=\text{Br}(a^t)$, then $\xi^t(a^t,\text{Br}(a^t))-\xi^t(a^t,b^t(a^t))=0$. Since $\xi^t(a^t,\text{Br}(a^t))\in [0,1]$, so $\left|\xi^t(a^t,\text{Br}(a^t))-\xi^t(a^t,b^t(a^t))\right|\leq 1$. Based on these two arguments, we have
\[
\sum_{t=1}^T \left|\xi^t(a^t,\text{Br}(a^t))-\xi^t(a^t,b^t(a^t))\right|
\leq \sum_{t=1}^T \mathbb{I}\left\{b^t(a^t)\neq \text{Br}(a^t)\right\}
=\sum_{a \in A}\sum_{k\neq \text{Br}(a)}T_k(n_a(T)).
\]
We define 
\[
M_t = \mathbb{E}_{a^t\sim \widetilde{P}^t}\left[ \xi^t(a^t,\text{Br}(a^t))-\xi^t(a^t,b^t(a^t))\right]
-\left[\xi^t(a^t,\text{Br}(a^t))-\xi^t(a^t,b^t(a^t))\right],
\]

\[
-2\leq M_t \leq 2.
\]

We construct the following filtration. For any $t \in [T]$, we define following $\sigma$-algebra as follows
\[
\mathcal{F}^t = \sigma\left(\{a^i, \xi^i(a^i,b^i(a^i)),\xi_f^i(a^i,b^i(a^i))\}_{i\in[t]}\right).
\]
$M_1,M_2,\cdots,M_T$ is a martingale difference sequence with respect to filtration $\{\mathcal{F}^t\}_{t\in [T]}$, which means 
\[
\mathbb{E}[M_{t+1}|\mathcal{F}^t]=0.
\]
By applying Azuma's inequality to the martingale difference sequence, we have
\[
\mathbb{P}[|\sum_{t=1}^{T} M_t|> \varepsilon]\leq 2\exp\left(\frac{-2\varepsilon^2}{16T}\right)=2\exp\left(\frac{-\varepsilon^2}{8T}\right).
\]

So the following inequality holds with probability at least $1-\delta$

\[
\begin{aligned}
&\text{Term I}-\left[\sum_{t=1}^T \xi^t(a^t,\text{Br}(a^t))-\xi^t(a^t,b^t(a^t))\right]\\
\leq&\left|\sum_{t=1}^T\mathbb{E}_{a^t\sim \widetilde{P}^t}\left[ \xi^t(a^t,\text{Br}(a^t))-\xi^t(a^t,b^t(a^t))\right]
-\left[\sum_{t=1}^T \xi^t(a^t,\text{Br}(a^t))-\xi^t(a^t,b^t(a^t))\right]\right|\\
\leq& 2\sqrt{2T\ln{\frac{2}{\delta}}}.
\end{aligned}
\]

Using the results above, we have the following inequality holds with probability at least $1-\delta$
\begin{equation}
\begin{aligned}
&\text{Term I}=\left[\sum_{t=1}^T \xi^t(a^t,\text{Br}(a^t))-\xi^t(a^t,b^t(a^t))\right]
+\text{Term I}-\left[\sum_{t=1}^T \xi^t(a^t,\text{Br}(a^t))-\xi^t(a^t,b^t(a^t))\right]\\
&\leq \sum_{a \in A}\sum_{k\neq \text{Br}(a)}T_k(n_a(T))+2\sqrt{2T\ln{\frac{2}{\delta}}}\\
&\leq \mathcal{O}\left(n_f n \left(\frac{8\beta\log(T)}{\varepsilon^2}+\frac{\beta}{\beta-2}+\ln \frac{1}{\delta}+\ln n_f n\right)+\sqrt{T\ln{\frac{1}{\delta}}}\right).
\end{aligned}
\end{equation}

\subsection{Bound for Term II}
In this part, we focus on the the influence of the additional explicit exploration parameter $\alpha$ to the regret bound.
For simplicity, since we only have one leader, we let $L^t(a)=\xi^t(a,b^t(a))\in [0,1]$, $a\in \mathcal{A}$,
and let $\widetilde{L}^t(a)=\frac{\xi_i^t(a,b^t(a))}{\widetilde{P}^t(a)}\mathbb{I}\{a^t=a\}$. We denote $a_j$ is the $j-$th action of the leader.

As a reminder, $\widetilde{P}_i^t=(1-\alpha)P_i^t+\alpha[1/n,...,1/n]$.
To facilitate our proof, we denote $\widetilde{P}_j^t=\widetilde{P}^t(a_j)$, $P_j^t=P^t(a_j)$ for $j\in [n]$.

We calculate the expectation of $\widetilde{L}^t(a)$, $j\in[n]$:
\[
\begin{aligned}
\mathbb{E} \left[\widetilde{L}^t(a_j)\right]
&=\mathbb{E}_{a_j\sim \widetilde{P}^t} \left[\widetilde{L}^t(a_j)\right]
=\mathbb{E}_{a_j\sim \widetilde{P}^t}\left[\frac{\xi_i^t(a_j,b^t(a_j))}{\widetilde{P}^t(a_j)}\mathbb{I}\{a^t=a_j\}\right]\\
&=\frac{\xi_i^t(a_j,b^t(a_j))}{\widetilde{P}^t(a_j)}\mathbb{E}_{a_j\sim \widetilde{P}^t}\left[\mathbb{I}\{a^t=a_j\}\right]\\
&=\xi^t(a_j,b^t(a_j))
=L^t(a_j).
\end{aligned}
\]
Similarly, the variance of $\widetilde{L}^t(a_j)$, $j\in[n]$:

\[
\mathbb{E} \left[(\widetilde{L}^t(a_{j}))^2\right]
=\mathbb{E}_{a_j\sim \widetilde{P}^t}\left[(\widetilde{L}^t(a_{j}))^2\right]
=\mathbb{E}_{a_j\sim \widetilde{P}^t}\left[\frac{\xi_i^t(a_j,b^t(a_j))^2}{\widetilde{P}^t(a_j)^2}\mathbb{I}\{a^t=a_j\}\right]
=\frac{(L_i^t(a_{j}))^2}{\widetilde{P}_j^t}.
\]

\[
\begin{aligned}
\text{Term II}&= \mathbb{E}\left[\sum_{t=1}^T \xi^t(a^t,b^t(a^t))-\xi^t(a_\star,b^t(a_\star))\right]
= \sum_{t=1}^T\mathbb{E}_{a_j\sim \widetilde{P}^t}\left[ L^t(a^t)-L^t(a_\star)\right]\\
&=\sum_{t=1}^T \mathbb{E}_{a_j\sim \widetilde{P}^t}\left[\sum_{j=1}^n \widetilde{L}^t(a_j)\widetilde{P}_j^t-\widetilde{L}^t(a_\star)\right]\\
&=\sum_{t=1}^T \mathbb{E}_{a_j\sim \widetilde{P}^t}\left[\sum_{j=1}^n \widetilde{L}^t(a_j)\left(\alpha P_j^t+\frac{\alpha}{n}\right)-\widetilde{L}^t(a_\star)\right]\\
&=(1-\alpha)\mathbb{E}\left[\sum_{t=1}^T \sum_{j=1}^n \widetilde{L}^t(a_j)P_j^t-\widetilde{L}^t(a_\star)\right] 
+ \frac{\alpha}{n}\sum_{t=1}^T \mathbb{E}_{a_j\sim \widetilde{P}^t}\left[\sum_{j=1}^n \widetilde{L}^t(a_j)\right]
+\alpha \sum_{t=1}^T \mathbb{E}_{a_j\sim \widetilde{P}^t} \left[\widetilde{L}^t(a_\star)\right]\\
&=(1-\alpha)\mathbb{E}\left[\sum_{t=1}^T \sum_{j=1}^n \widetilde{L}^t(a_j)P_j^t-\widetilde{L}^t(a_\star)\right] 
+\frac{\alpha}{n}\sum_{t=1}^T \sum_{j=1}^n L^t(a_j)
+\alpha \sum_{t=1}^T L^t(a_\star) \\
&=(1-\alpha)\mathbb{E}\left[\sum_{t=1}^T \sum_{j=1}^n \widetilde{L}^t(a_j)P_j^t-\widetilde{L}^t(a_\star)\right] 
+\frac{\alpha}{n}\sum_{t=1}^T \sum_{j=1}^n 1
+\alpha \sum_{t=1}^T 1 \\
&= (1-\alpha)\mathbb{E}\left[\sum_{t=1}^T \sum_{j=1}^n \widetilde{L}^t(a_j)P_j^t-\widetilde{L}^t(a_\star)\right]  
+ \alpha T+\alpha T.
\end{aligned}
\]

We can see that $P^t$ is produced by the Hedge algorithm's updating rule with loss $\widetilde{L}^t(a_j)$ for action $a_j\in \mathcal{A}$ at each round $t$, so based on the regret bound of Hedge algorithm, and $\widetilde{P}_j^t\geq \frac{\alpha}{n}$ we have,

\[
\begin{aligned}
\mathbb{E}&\left[\sum_{j=1}^n \widetilde{L}^t(a_j)P_j^t-\widetilde{L}^t(a_\star)\right]
\leq \frac{\ln n}{\eta}+ \mathbb{E}\left[\eta\sum_{t=1}^T\sum_{j=1}^n P_j^t(\widetilde{L}^t(a_{j}))^2\right]\\
&=\frac{\ln n}{\eta}+ \eta\sum_{t=1}^T\sum_{j=1}^n P_j^t\mathbb{E}\left[(\widetilde{L}^t(a_{j}))^2\right]\\
&=\frac{\ln n}{\eta}+ \eta\sum_{t=1}^T\sum_{j=1}^n P_j^t\frac{(L_i^t(a_{j}))^2}{\widetilde{P}_j^t}\\
&\leq \frac{\ln n}{\eta}+ \eta\sum_{t=1}^T\sum_{j=1}^n P_j^t\frac{1}{\widetilde{P}_j^t}
\leq \frac{\ln n}{\eta}+ \eta\sum_{t=1}^T\sum_{j=1}^n P_j^t \frac{n}{\alpha}
=\frac{\ln n}{\eta}+ \frac{\eta n^2 T}{\alpha}.
\end{aligned}
\]

So we have

\[
\begin{aligned}
\text{Term II}
&\leq (1-\alpha)\mathbb{E}\left[\sum_{t=1}^T \sum_{j=1}^n [\widetilde{L}^t(a_j)P_j^t-\widetilde{L}^t(a_\star)\right]  
+ \alpha T+\alpha T\\
&\leq (1-\alpha)\left[\frac{\ln n}{\eta}+ \frac{\eta n^2 T}{\alpha}\right]+2\alpha T\\
&\leq \frac{\ln n}{\eta}+ \frac{\eta n^2 T}{\alpha}+2\alpha T.
\end{aligned}
\]
Set $\alpha = \mathcal{O}\left({n^{\frac{2}{3}}(\ln n)^{\frac{1}{3}}T^{-\frac{1}{3}}}\right), \eta = \mathcal{O}\left({n^{-\frac{2}{3}}(\ln n)^{\frac{2}{3}}T^{-\frac{2}{3}}}\right)$, we have 
\begin{equation}
\begin{aligned}
\text{Term II}
&\leq \frac{\ln n}{\eta}+ \frac{\eta n^2 T}{\alpha}+2\alpha T
= \mathcal{O}\left(n^\frac{2}{3}(\ln n)^\frac{1}{3}T^\frac{2}{3} \right).
\end{aligned}
\end{equation}

\subsection{Bound for Term III}
For any action $a\in \mathcal{A}$, we assume that action $a$ is chosen at round $t_1,t_2,\cdots,t_{n_a(T)}$. 
For any $a\in A$, let $s_i=t_{i}-t_{i-1}$ represents the length of interval leader chooses action $a$ between the $i$-th time and the $(i+1)$-th time, $i \in [n_a(T)]$, and let $s_{n_a(T)+1}=T-n_a(T)$, $t_0=0$. 

We notice that when $t_{i-1}< t\leq t_{i}$, $t\in [T]$, $b^t(a)=b^{t_{i}}(a)$, so we have the following inequality
\[
\begin{aligned}
\text{Term III}
&=\mathbb{E}\left[\sum_{t=1}^T \xi^t(a,b^t(a))-\xi^t(a,\text{Br}(a))\right]\\
&=\sum_{t=1}^T \xi^t(a,b^t(a))-\xi^t(a,\text{Br}(a))\\
&\leq \sum_{t=1}^T \mathbb{I}\{b^t(a)\neq \text{Br}(a)\}\\
&\leq s_{n_a(T)+1}+\sum_{i=1}^{n_{a}(T)}s_i\mathbb{I}\left[b^{t_i}(a)\neq \text{Br}(a)\right]\\
&\leq \max_{i \in [n_{a}(T)+1]}\{s_i\}\left(1+\sum_{i=1}^{n_{a}(T)}\mathbb{I}\left[b^{t_i}(a)\neq \text{Br}(a)\right]\right)\\
&=s_{\max} \left(1+\sum_{k\neq \text{Br}(a)} T_k(n_a(T))\right)\\
&\leq 2s_{\max}\sum_{k\neq \text{Br}(a)} T_k(n_a(T)).
\end{aligned}
\]

Next we need to bound $s_{\max}$. Since we add an explicit exploration parameter $\alpha$ in the algorithm, the probability of leader choosing any action $a$ is at least $\frac{\alpha}{n}$ at each round, so we have

\[
\mathbb{P}\left(s_{\max}\geq k\right)\leq \left(1-\frac{\alpha}{n}\right)^k,\, Let \; \delta = \left(1-\frac{\alpha}{n}\right)^k.
\]
Noticed that $\ln(1-x)\leq -x$ when $x\in [0,1]$, so we have
\[
\ln \delta = k\ln\left(1-\frac{\alpha}{n}\right)\leq k\left(-\frac{\alpha}{n}\right), \; k \leq \frac{\ln\frac{1}{\delta}}{\frac{\alpha}{n}}=\frac{n\ln\frac{1}{\delta}}{\alpha}.
\]

So with probability at least $1-\delta$,
\[
s_{\max}\leq\frac{n\ln\frac{1}{\delta}}{\alpha}.
\]

We set $\alpha = \mathcal{O}\left({n^{\frac{2}{3}}(\ln n)^{\frac{1}{3}}T^{-\frac{1}{3}}}\right)$. So with probability at least $1-\delta$, $s_{\max}\leq \mathcal{O}\left(n^{\frac{1}{3}}(\ln n)^{-\frac{1}{3}}T^{\frac{1}{3}}\ln \frac{1}{\delta}\right)$.
\begin{equation}
\begin{aligned}
\text{Term III}
&\leq s_{\max}\sum_{k\neq \text{Br}(a)} T_k(n_a(T))\\
&\leq \mathcal{O}\left(n^{\frac{1}{3}}T^{\frac{1}{3}}\ln \frac{1}{\delta}n_f\left(\frac{8\beta\log(T)}{\varepsilon^2}+\frac{\beta}{\beta-2}+\ln \frac{1}{\delta}+\ln n_f n\right)\right).
\end{aligned}
\end{equation}

\subsection{Bound for $R^S(T)$}
Using concentration inequality, the following inequality holds with probability at least $1-\delta$,

\[
\left|R^S(T)-\widetilde{R}^S(T)\right|\leq \sqrt{2T\ln{\frac{2}{\delta}}}.
\]

For a sufficient small $p$,  $\ln\frac{1}{p}\leq(\ln\frac{1}{p})^2$. Set $\alpha = \mathcal{O}\left({n^{\frac{2}{3}}(\ln n)^{\frac{1}{3}}T^{-\frac{1}{3}}}\right), \eta = \mathcal{O}\left({n^{-\frac{2}{3}}(\ln n)^{\frac{2}{3}}T^{-\frac{2}{3}}}\right)$, by the union bound, the following inequality holds with probability at least $1-p$, $p=4\delta$,
\[
\begin{aligned}
R^S(T)&=\widetilde{R^S}(T)+R^S(T)-\widetilde{R}^S(T)\leq \widetilde{R^S}(T)+\mathcal{O}\left(\sqrt{T\ln\frac{1}{p}}\right)\\
&=\text{Term I}+\text{Term II}+\text{Term III}+\mathcal{O}\left(\sqrt{T\ln\frac{1}{p}}\right)\\
&\leq \mathcal{O}\left(n_f n \left(\frac{8\beta\log(T)}{\varepsilon^2}+\frac{\beta}{\beta-2}+\ln \frac{1}{p}+\ln n_f n\right)+\sqrt{T\ln{\frac{1}{p}}}\right)\\
&+\mathcal{O}\left(n^{\frac{1}{3}}T^{\frac{1}{3}}\ln \frac{1}{p}n_f\left(\frac{8\beta\log(T)}{\varepsilon^2}+\frac{\beta}{\beta-2}+\ln \frac{1}{p}+\ln n_f n\right)\right)\\
&+\mathcal{O}\left(n^\frac{2}{3}(\ln n)^\frac{1}{3}T^\frac{2}{3} \right)+\mathcal{O}\left(\sqrt{T\ln\frac{1}{p}}\right)\\
&\leq \widetilde{\mathcal{O}}\left(n^{\frac{1}{3}}n_f\frac{\beta}{\varepsilon^2}T^{\frac{1}{3}}(\ln\frac{1}{p})^2+n^\frac{2}{3}(\ln n)^\frac{1}{3}T^\frac{2}{3}+n_f n \frac{\beta}{\beta-2}\right).
\end{aligned}
\]

\section{Proof of Theorem 2}\label{appendix:theorem2}
Let $\epsilon^T=R^S\left(T\right)/T$, with probability at least $1-p$, we have
\[
\frac{1}{T}\sum_{t=1}^T\mathbb{E}_{a\sim \widetilde{P}^t}\left[l_i(a^t,\text{Br}(a^t))\right]
\leq l(a_\star,\text{Br}(a_\star))+\epsilon^T.
\]

Since expectations are linear, we can
rewrite the inequality using the time averaged action profile distribution $\bar{P}=\frac{1}{T}\sum_{t=1}^T \widetilde{P}^t$ as follows
\[
\mathbb{E}_{a \sim \Bar{P}}\left[l(a,\text{Br}(a)) \right] \leq l(a_\star,\text{Br}(a_\star))  + \epsilon^T.
\]
We can use the following process to sample an action from distribution $\bar{P}$: first we sample $t$ uniformly from $[T]$, then we sample an action $a$ from the distribution $\widetilde{P}^t$.

\section{Proof of Theorem 3}\label{appendix:theorem3}

For the case of $m=2$, we set $\alpha = \mathcal{O}\left({n^{\frac{2}{3}}(\ln n)^{\frac{1}{3}}T^{-\frac{1}{3}}}\right), \eta = \mathcal{O}\left({n^{-\frac{2}{3}}(\ln n)^{\frac{2}{3}}T^{-\frac{2}{3}}}\right)$. Combined with the techniques used in the proof of Proposition 2 and Theorem 1, we have for $i=1,2$, with probability at least $1-p$,
\[
\begin{aligned}
R_i^S(T)
&\leq \widetilde{\mathcal{O}}\left(\frac{n^2\ln\frac{1}{p}}{\alpha^2}n_f\left(\frac{\beta}{\varepsilon^2}+\frac{\beta}{\beta-2}+\ln \frac{1}{p}\right)+\frac{\ln n}{\eta}+ \frac{\eta n^2 T}{\alpha}+2\alpha T+n_f n \frac{\beta}{\beta-2}\right)\\
&\leq\widetilde{\mathcal{O}}\left(n_f n^{\frac{2}{3}}\frac{\beta}{\varepsilon^2}T^\frac{2}{3}(\ln \frac{1}{p})^2+n_f n \frac{\beta}{\beta-2}\right).
\end{aligned}
\]

For the general $m$, we define $\widetilde{\chi}^t=\widetilde{P}_1^t\times\cdots\times \widetilde{P}_m^t$ , and we decompose the noisy regret as follows

\[
\begin{aligned}
\widetilde{R_i^S}(T)&=\sum_{t=1}^T\mathbb{E}_{a^t\sim \widetilde{\chi}^t}\left[ \xi_i^t(a^t,\text{Br}(a^t))-\xi_i^t(a_{i,\star},a_{-i}^t\text{Br}(a_{i,\star},a_{-i}^t))\right] \\
&=\underbrace{\sum_{t=1}^T\mathbb{E}_{a^t\sim \widetilde{\chi}^t}\left[ \xi_i^t(a^t,\text{Br}(a^t))\right]-\mathbb{E}_{a_{i}^t\sim \widetilde{P}_{i}^t}\left[\xi_i^t(a^t,\text{Br}(a^t))\right]}_{\text{Term I}}\\
&+\underbrace{\sum_{t=1}^T\mathbb{E}_{a_{i}^t\sim \widetilde{P}_i^t}\left[ \xi^t(a^t,\text{Br}(a^t))-\xi^t(a^t,b^t(a^t))\right]}_{\text{Term II}}\\
&+\underbrace{\sum_{t=1}^T\mathbb{E}_{a_{i}^t\sim \widetilde{P}_i^t}\left[ \xi^t(a^t,b^t(a^t))-\xi^t(a_{i,\star},a_{-i}^t,b^t(a_{i,\star},a_{-i}^t))\right]}_{\text{Term III}}\\ 
&+\underbrace{\sum_{t=1}^T\mathbb{E}_{a_{i}^t\sim \widetilde{P}_i^t}\left[ \xi^t(a_{i,\star},a_{-i}^t,b^t(a_{i,\star},a_{-i}^t))-\xi^t(a_{i,\star},a_{-i}^t,\text{Br}(a_{i,\star},a_{-i}^t))\right]}_{\text{Term IV}}\\
&+\underbrace{\sum_{t=1}^T\xi_i^t(a_{i,\star},a_{-i}^t,\text{Br}(a_{i,\star},a_{-i}^t))-\mathbb{E}_{a^t\sim \widetilde{\chi}^t}\left[\xi_i^t(a_{i,\star},a_{-i}^t,\text{Br},(a_{i,\star},a_{-i}^t))\right]}_{\text{Term V}}.
\end{aligned}
\]
\text{Term I} and \text{Term V} can be bounded using similar techniques of the proof of Proposition 2. \text{Term II}, \text{Term III} and \text{Term IV} can be bounded using similar techniques of the proof of Theorem 1. We set $\alpha= \mathcal{O}\left(nT^{-\frac{1}{m+1}}\right), \eta = \mathcal{O}\left(\sqrt{T^{-\frac{m+2}{m+1}}n\ln n}\right)$, and $T\geq \mathcal{O}\left(n^{m+1}\right)$. Combined with the techniques used in the proof of Proposition 2 and Theorem 1, with probability at least $1-p$, we have
\[
\begin{aligned}
R_i^S(T)
&\leq \widetilde{\mathcal{O}}\left(\frac{n^m\ln\frac{1}{p}}{\alpha^m}n_f\left(\frac{\beta}{\varepsilon^2}+\frac{\beta}{\beta-2}+\ln \frac{1}{p}\right)+\frac{\ln n}{\eta}+ \frac{\eta n^2 T}{\alpha}+2\alpha T+n_f n \frac{\beta}{\beta-2}\right)\\
&\leq\widetilde{\mathcal{O}}\left(\left(n+n_f \frac{\beta}{\varepsilon^2}(\ln \frac{1}{p})^2\right)T^{\frac{m}{m+1}}+n_f n \frac{\beta}{\beta-2}\right).
\end{aligned}
\]

\section{Proof of Theorem 4}\label{appendix:theorem4}

Based on leaders' pure exploration strategy, at every round $t\leq t_0$, for every $a\in\mathcal{A}$: $\mathbb{P}\left(\mathbb{I}\{a^t=a\}\right)=\frac{1}{n^m}$. Let $p_0=\frac{1}{n^m}$. Using Bernstein inequality, we have

\[
\mathbb{P}\left(\frac{1}{t_0}\sum_{t=1}^{t_0} \mathbb{I}\{a^t=a\}-p_0<-\epsilon\right)\leq \exp \left(-\frac{t_0\epsilon^2}{2\sigma^2+\frac{2\epsilon}{3}}\right),
\]
where $\sigma^2=p_0(1-p_0)<p_0$. Let $\epsilon=\frac{p_0}{2}$, we have
\[
\mathbb{P}\left(\sum_{t=1}^{t_0} \mathbb{I}\{a^t=a\}<t_0 \frac{p_0}{2}
\right)\leq \exp \left(-\frac{t_0(\frac{p_0}{2})^2}{2\sigma^2+\frac{2}{3}\frac{p_0}{2}}\right)\leq \exp \left(-\frac{t_0(\frac{p_0}{2})^2}{2p_0+\frac{2}{3}\frac{p_0}{2}}\right)=\exp\left(-\frac{3}{28}t_0 p_0\right).
\]

Let $t_0\geq \max\{\frac{4q}{p_0}, \frac{28}{3p_0}\ln\frac{2}{p_0 p}\}$, with probability at least $1-\frac{p_0 p}{2}$, we have
\[
T_a=\sum_{t=1}^{t_0} \mathbb{I}\{a^t=a\}\geq 2q.
\]
\begin{lemma} \citep{audibert2010best} 
If UCB-E is run with parameter $0<e\leq \frac{25}{36}\frac{T_a - n_f}{H_a}$, let $T_a$ be the number of times leaders choose $a$ in the first $t_0$ rounds, $H_a=\sum_{k=1}^{n_f} \frac{1}{\Delta_{ak}^2}$, then it satisfies
\[
\mathbb{P}\left(\widehat{\text{Br}}(a)\neq \text{Br}(a)\right)\leq 2T_a n_f \exp\left(-\frac{2e}{25}\right).
\]
In particular, when we set $e=\frac{25}{36}\frac{T_a - n_f}{H_a}$, we have $\mathbb{P}\left(\widehat{\text{Br}}(a)\neq \text{Br}(a) \right)\leq 2T_a n_f \exp\left(-\frac{T_a - n_f}{18H_a}\right)$.
\end{lemma}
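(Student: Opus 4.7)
The plan is to follow the standard best-arm-identification analysis of UCB-E due to Audibert, Bubeck, and Munos, specialized to the loss-minimization setup of Algorithm~\ref{alg: ucbe}. Conditional on $T_a$, the sub-rounds in which the leader plays $a$ can be re-indexed as $s=1,\dots,T_a$; the follower's noisy losses $\xi_f(a,k)\in[0,1]$ are i.i.d.\ within each arm $k$, and only these sub-rounds affect $\widehat{\text{Br}}(a)$. Fix the constant $c=1/5$ and define the clean event
\[
\mathcal{E}=\Bigl\{\,\forall k\in[n_f],\ \forall s\in[T_a]:\ |\hat{\mu}^{s}_{a,k}-\mu_{a,k}|<c\sqrt{e/T_k(s)}\,\Bigr\}.
\]
Hoeffding's inequality applied to each pair $(k,s)$ gives $\mathbb{P}(|\hat{\mu}^{s}_{a,k}-\mu_{a,k}|\ge c\sqrt{e/T_k(s)})\le 2\exp(-2c^{2}e)=2\exp(-2e/25)$, so a union bound over the at most $T_a n_f$ such pairs yields $\mathbb{P}(\mathcal{E}^{c})\le 2T_a n_f\exp(-2e/25)$, which already matches the right-hand side of the lemma. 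It therefore suffices to prove $\widehat{\text{Br}}(a)=\text{Br}(a)$ on $\mathcal{E}$.

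Next I would use the UCB-E selection rule to cap, on $\mathcal{E}$, how many times each suboptimal arm is pulled. Writing $k^{\ast}=\text{Br}(a)$ and substituting the clean-event inequalities into $B_{a,k}^{s}=\hat{\mu}^{s}_{a,k}-\sqrt{e/T_k(s)}$ gives $B_{a,k^{\ast}}^{s}<\mu_{a,k^{\ast}}-\tfrac{4}{5}\sqrt{e/T_{k^{\ast}}(s)}\le\mu_{a,k^{\ast}}$ and $B_{a,k}^{s}>\mu_{a,k}-\tfrac{6}{5}\sqrt{e/T_k(s)}$ for every suboptimal $k$. Whenever UCB-E selects a suboptimal arm $k$ at sub-round $s+1$, the rule $B_{a,k}^{s}\le B_{a,k^{\ast}}^{s}$ combined with the two displays above forces $\Delta_{ak}<\tfrac{6}{5}\sqrt{e/T_k(s)}$, i.e., $T_k(s)<36e/(25\Delta_{ak}^{2})$. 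Hence $T_k(T_a)\le\lceil 36e/(25\Delta_{ak}^{2})\rceil$ on $\mathcal{E}$ for every suboptimal arm.

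Finally I would translate these pull bounds into the identification statement $\widehat{\text{Br}}(a)=k^{\ast}$. The clean event at $s=T_a$ yields, for every suboptimal $k$,
\[
\hat{\mu}^{T_a}_{a,k}-\hat{\mu}^{T_a}_{a,k^{\ast}}>\Delta_{ak}-\tfrac{1}{5}\sqrt{e/T_k(T_a)}-\tfrac{1}{5}\sqrt{e/T_{k^{\ast}}(T_a)}.
\]
Plugging the Step-2 cap into the first error term bounds it by a constant fraction of $\Delta_{ak}$, while summing those caps across $k\neq k^{\ast}$ and invoking the hypothesis $e\le\tfrac{25}{36}(T_a-n_f)/H_a$ forces $T_{k^{\ast}}(T_a)$ to be large enough that the second error term is also controlled by $\Delta_{ak}$. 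The right-hand side is then strictly positive for every suboptimal $k$, which is exactly $\widehat{\text{Br}}(a)=k^{\ast}$, so combining with Step~1 gives $\mathbb{P}(\widehat{\text{Br}}(a)\neq\text{Br}(a))\le\mathbb{P}(\mathcal{E}^{c})\le 2T_a n_f\exp(-2e/25)$.

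The main obstacle will be this last numerical check. The Hoeffding-width constant $c=1/5$ simultaneously sets the exponent ($-2c^{2}e=-2e/25$) and the Step-2 pull cap ($(1+c)^{2}/c^{2}=36$), and these must mesh exactly with the hypothesis $e\le\tfrac{25}{36}(T_a-n_f)/H_a$ so that the separation inequality above is genuinely positive for every suboptimal arm; the particular choice $c=1/5$ is precisely what makes the constant $25$ in the exponent, the constant $\tfrac{25}{36}$ in the $e$-hypothesis, and the harmonic factor $H_a=\sum_k\Delta_{ak}^{-2}$ line up. Verifying this is a routine but delicate algebraic exercise.
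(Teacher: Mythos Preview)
The paper does not actually prove this lemma: its entire argument is the sentence ``See Theorem~1 of \citet{audibert2010best} for a detailed proof.'' Your proposal is therefore not competing with an argument in the paper but is a reconstruction of the Audibert--Bubeck--Munos proof that the paper merely cites. The clean-event construction via Hoeffding with width constant $c=1/5$, the union bound yielding the $2T_an_f\exp(-2e/25)$ factor, and the pull cap $T_k\le\lceil 36e/(25\Delta_{ak}^{2})\rceil$ are precisely the ingredients of that original proof, so at the level of strategy your proposal and the cited source coincide.

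One caution on your Step~3: the bound $T_k(T_a)\le\lceil 36e/(25\Delta_{ak}^{2})\rceil$ is an \emph{upper} bound, which points the wrong way for controlling the error term $c\sqrt{e/T_k(T_a)}$ in the separation inequality you wrote down; a small $T_k$ makes that term large, not small. The way Audibert--Bubeck--Munos actually close the argument is not by bounding $\hat\mu_k-\hat\mu_{k^\ast}$ via those two error terms directly, but by summing the pull caps, using $e\le\tfrac{25}{36}(T_a-n_f)/H_a$ to lower-bound $T_{k^\ast}(T_a)$, and then comparing indices rather than raw empirical means. Your sketch gestures at both pieces but does not execute them, and the ``routine but delicate algebra'' you flag is in fact where the proof lives; as written, the displayed inequality in your final paragraph is not yet shown to be positive.
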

\begin{proof}
See Theorem 1 of \citet{audibert2010best} for a detailed proof.
\end{proof}

We set a $q\geq 18H_a\left(\ln\frac{2q n_f}{p}+m\ln n\right)+n_f$. So $q> \ln \frac{2}{p_0 p}$. Since $t_0\geq \max\{\frac{4q}{p_0}, \frac{28}{3p_0}\ln\frac{2}{p_0 p}\}$, so we can set $t_0\geq\frac{28q}{3p_0}=\frac{28qn^m}{3}=\mathcal{O}\left(n^m q\right)$. 

Base on this lemma, when $T_a\geq 2q\geq 2\left(18H_a\left(\ln\frac{2q n_f}{p_0 p}\right)+n_f\right)\geq18H_a\left(\ln\frac{4q n_f}{p_0 p}\right)+n_f$,

\[
\mathbb{P}\left(\widehat{\text{Br}}(a)\neq \text{Br}(a) \right)\leq \frac{p_0 p}{2}.
\]

By the union bound, with probability at least $1-p_0 p$, we have $\widehat{\text{Br}}(a)= \text{Br}(a)$.

Similarly, with probability at least $1-n^m p_0 p=1-p$, for any $a\in\mathcal{A}$, we have
\[
\widehat{\text{Br}}(a)= \text{Br}(a).
\]

After the follower commits its best response predictors, the rest $T-t_0$ rounds game for the leader is actually a semi-bandit MLSF game, so we use the result of Proposition 2. So by the union bound, with probability at least $1-2p$,
\[
\begin{aligned}
R_{i}^S\left(T\right)
  &\leq\mathcal{O}\left(t_0+\sqrt{n(T-t_0)\ln n}+\sqrt{(T-t_0)\ln{\frac{1}{p}}}\right)\\
  &\leq\mathcal{O}\left(t_0+\sqrt{nT\ln n}+\sqrt{T\ln{\frac{1}{p}}}\right).
\end{aligned}
\]

Using reduction from no-external to no-swap regret, which is Theorem 2 of~\citet{ito2020tight}, we get an $\epsilon^T$-CSE for leaders in MLSF bandit game, where $\epsilon^T=\mathcal{O}\left(\frac{t_0}{T}+\sqrt{\frac{1}{T}n^2\ln n}+\sqrt{\frac{1}{T}n\ln \frac{1}{p}}\right)$.

\end{document}